\documentclass{article}

\usepackage[preprint]{neurips_2026}

\usepackage[utf8]{inputenc} 
\usepackage[T1]{fontenc}    
\usepackage{hyperref}       
\usepackage{url}            
\usepackage{booktabs}       
\usepackage{amsfonts}       
\usepackage{amsmath,amssymb,amsthm}     
\usepackage{nicefrac}       
\usepackage{microtype}      
\usepackage{xcolor}         
\usepackage{enumitem}

\newtheorem{proposition}{Proposition}
\theoremstyle{remark}

\usepackage[most]{tcolorbox}
\usepackage{algorithm}
\usepackage{algpseudocode}
\usepackage{caption}
\usepackage{float} 

\DeclareMathOperator{\qf}{qf}
\DeclareMathOperator{\softplus}{softplus}
\DeclareMathOperator{\diag}{diag}

\newtcolorbox{tsvdremark}{
  enhanced,
  colback=black!4,
  colframe=black!18,
  boxrule=0.35pt,
  arc=1pt,
  left=5pt,
  right=5pt,
  top=4pt,
  bottom=4pt,
  before skip=6pt,
  after skip=6pt,
}

\title{Efficient Pre-Training of LLMs\\through Truncated SVD Layers}

\author{%
  Kaivan Kamali \thanks{Corresponding author. Email: \texttt{kaivan.kamali@cognizant.com}} \\
  Cognizant AI Lab \\
  \texttt{kaivan.kamali@cognizant.com} \\
  \And
  Kajetan Schweighofer\thanks{Equal contribution}\\
  Cognizant AI Lab \\
  \texttt{kai.schweighofer@gmx.at} \\
  \And
  Hormoz Shahrzad\footnotemark[2] \\
  UT Austin \& Cognizant AI Lab \\
  \texttt{hormoz@cognizant.com} \\
  \And
  Olivier Francon  \\
  Cognizant AI Lab \\  
  \texttt{olivier.francon@cognizant.com} \\
  \And
  Babak Hodjat \\
  Cognizant AI Lab \\
  \texttt{babak.hodjat@cognizant.com} \\
  \And
  Risto Miikkulainen \\
  UT Austin \& Cognizant AI Lab \\
  \texttt{risto@cs.utexas.edu} \\
}

\begin{document}

\maketitle

\begin{abstract}
  The massive scaling of Large Language Models (LLMs) has made pretraining increasingly cost-prohibitive. While low-rank representation and orthonormal weight matrices could in principle reduce parameter counts and computational overhead, most existing methods rely on static rank selection and do not enforce weight orthonormality due to high computational cost. This paper introduces TSVD, a framework that maintains low rank and strict orthonormality throughout the training process. It utilizes a spectral energy-based heuristic for adaptive rank selection, and a caching mechanisms to maintain orthonormality. Theoretical analysis justifies the advantage of the approach in pretraining dynamics and experiments across various model scales demonstrate that it is effective empirically. TSVD matches or exceeds the performance of full-parameter baselines while significantly reducing compute requirements. The approach thus offers a well-founded, practical, and scalable path toward efficient high-performance LLM pretraining.
\end{abstract}

\section{Introduction}

The parameter count of Large Language Models (LLMs) has grown exponentially in the last few years, scaling from millions to over a trillion \cite{article, minaee2025largelanguagemodelssurvey, zhao2026surveylargelanguagemodels}. To pretrain such models, immense memory and computational resources are needed \cite{hoffmann2022trainingcomputeoptimallargelanguage, kaplan2020scalinglawsneurallanguage}. To mitigate these costs, various optimization strategies have been developed. In particular, representing LLM weight matrices through low-rank approximations has gained prominence as an effective approach \cite{kamalakara2022exploringlowranktraining, khodak2022initializationregularizationfactorizedneural, savostianova2023robustlowranktrainingapproximate, sui2024elrtefficientlowranktraining, wei2024investigatinglowranktrainingtransformer}.

Representing LLM weight matrices as low-rank products drastically reduces parameter counts and computational overhead. However, this approach typically yields higher training and evaluation loss relative to full-parameter models \cite{zhao2024galorememoryefficientllmtraining, sui2024elrtefficientlowranktraining, wei2024investigatinglowranktrainingtransformer}. Maintaining the orthonormality of low-rank matrices alleviates vanishing and exploding gradients, reduces overfitting, and enhances generalization. Empirically, this approach leads to higher accuracy and faster convergence rates \cite{arjovsky2016unitaryevolutionrecurrentneural, cogswell2016reducingoverfittingdeepnetworks, bansal2018gainorthogonalityregularizationstraining}. However, maintaining orthonormality is computationally expensive and has historically been avoided. Orthogonality regularization is a possible approximation \cite{yang2020learninglowrankdeepneural}, but it necessitates the tuning of an additional hyperparameter to balance the regularization penalty against the primary loss objective. An alternative is to initialize the weight matrices as low-rank products and enforce orthonormality periodically, thus striking a trade-off between computational overhead and model performance \citep{mo2025parameter}.

This paper proposes Truncated Singular Value Decomposition (TSVD) layers as an efficient low-rank approximation for Large Language Model (LLM) weight matrices. Unlike existing low-rank methods, TSVD layers maintain strict orthonormality throughout the training process without incurring prohibitive computational overhead. This preservation of orthonormality yields superior training and evaluation loss compared to other low-rank approximations. Empirically, the performance of models utilizing TSVD layers matches or exceeds that of full-parameter models. Furthermore, the reduced memory and compute footprint make it possible to scale this approximation to larger LLM architectures.

The main contributions of this paper are:

\begin{enumerate}[leftmargin=20pt, topsep=0pt, itemsep=4pt, partopsep=0pt, parsep=0pt]
    \item A mechanism to enforce and maintain orthonormality efficiently throughout the training process. The idea is to employ a caching mechanism that amortizes the cost of orthonormalization across gradient accumulation steps.
    \item A mechanism for adaptive rank selection by leveraging spectral energy of the weight matrix. The method results in a better alignment of model capacity with the intrinsic dimensionality of each layer, which in turn improves performance.
    \item A theoretical analysis as to why orthonormality is a good approach to low-rank parameterization.
    \item A demonstration that together these mechanisms, i.e.\ the TSVD method, considerably reduces the number of parameters and computational cost of large-scale language modeling.
    \item Experiments demonstrating competitive or superior performance of models utilizing TSVD layers relative to dense, full-parameter models, despite significantly fewer parameters.
\end{enumerate}

TSVD method is thus a well-founded and practical approach for pretraining of LLMs efficiently. It provides a promising foundation for making pretraining more broadly accessible, as well as for improving LLM performance further with existing resources.

\section{Related Work}

This section reviews existing methods for reducing LLM pre-training costs. Each methodology is discussed in terms of its limitations, followed by an examination of subsequent techniques designed to address those shortcomings.

GaLore \cite{zhao2024galorememoryefficientllmtraining} reduced the pretraining memory requirements without reducing the number of parameters. The authors showed that the gradient matrix is low-rank during training. They projected the gradient matrix into a low-rank form, substantially reducing the memory cost for the optimizer states. However, during pretraining, the optimizer states only constitute a small proportion of the total memory \cite{shamshoum2025compactcompressedactivationsmemoryefficient}. Low-Rank Adaptation (LoRA) \cite{hu2021loralowrankadaptationlarge} is a popular \textit{fine-tuning} method that efficiently adapts a pretrained LLM to a specific application or problem. LoRA froze the weights of the pretrained model and added two low-rank trainable matrices; during the forward pass, it multiplied the input by both the frozen weights and the product of the low-rank matrices. Inspired by LoRA, ReLoRA \cite{lialin2023relorahighranktraininglowrank} \textit{pretrains} an LLM using multiple low-rank updates. Instead of updating the full weights, ReLoRA trained low-rank matrices and periodically merged the low-rank updates into the main weights. It then reinitialized the low-rank matrices and repeated this process. Over time, multiple low-rank updates accumulated into a high-rank update, enabling efficient training without updating all parameters at once. However, ReLoRA requires a "warm start" full-rank training for good performance and depends on an intricate learning-rate schedule. GaLore and ReRola successfully lower memory overhead for optimizer states and gradients but do not decrease the total number of trained parameters. This necessitates approaches that directly address parameter count reduction.

Several studies have investigated pre-training models via low-rank weight matrix representations, effectively reducing the total number of trainable parameters \cite{kamalakara2022exploringlowranktraining, khodak2022initializationregularizationfactorizedneural, savostianova2023robustlowranktrainingapproximate, sui2024elrtefficientlowranktraining, wei2024investigatinglowranktrainingtransformer}. While this approach reduced pretraining memory and compute requirements, its training and evaluation loss was usually worse than that of full-rank training \cite{zhao2024galorememoryefficientllmtraining, sui2024elrtefficientlowranktraining, wei2024investigatinglowranktrainingtransformer}. 

Incorporating orthonormality into low-rank representations serves as a strategy to mitigate the performance trade-offs typical of low-rank methods \cite{fernandezlopez2024fullrankmorelowrankweight, han2024sltrainsparsepluslowrank, li2025lostlowranksparsepretraining, yang2020learninglowrankdeepneural}. To enhance the performance of low-rank pre-training, LR-SMS \cite{fernandezlopez2024fullrankmorelowrankweight} proposed SVD initialization, wherein low-rank factors are initialized via Singular Value Decomposition (SVD) \cite{Eckart1936} to ensure initial orthonormality. SLTrain \cite{han2024sltrainsparsepluslowrank} and LOST \cite{li2025lostlowranksparsepretraining} decomposed each weight matrix into additive low-rank and sparse components. Both methods also utilized SVD initialization to provide a stable starting point, where the sparse component served as a complementary mechanism to recover information lost during compression by capturing salient features within the residual space. While LR-SMS, SLTrain, and LOST utilize SVD to initialize low-rank factors as orthonormal, this property is not preserved during training. Because the parameters remain unconstrained during optimization, they inevitably deviate from the orthonormality. To address this limitation, SVD Training \cite{yang2020learninglowrankdeepneural} uses orthonormality regularizers to maintain orthonormality during training, though it necessitates the tuning of an additional hyperparameter to balance the regularization penalty against the primary loss objective.

The Low-Rank Riemannian Optimizer (LORO) \cite{mo2025parameter} treated weight matrices as elements of a fixed-rank Riemannian manifold. During each update, low-rank factors were jointly optimized following the Riemannian gradient, facilitating effective descent of the loss function. This retraction-based process involved: (1) projecting the Euclidean gradient onto the tangent space to update the weights, and (2) retracting the updated weights back onto the manifold via SVD. In contrast to SVD Training, LORO eschewed explicit orthonormality regularizers. To maintain computational efficiency, the method bypassed the costly SVD retraction for the majority of iterations, performing the operation only every $K$ steps (e.g., $K=500$). Consequently, the weight matrices remain off the intended manifold for the bulk of the training process.

Distinct from the weight-centric strategies mentioned above, COLA \cite{liu2025colacomputeefficientpretrainingllms} leverages the low-rank nature of LLM activations by replacing MLP and attention projection layers with low-rank autoencoders. Its memory-efficient variant, COLA-M, further reduces overhead by utilizing low-rank activations for backward-pass reconstructions, while both frameworks preserve the original query, key, and value projection structures.

In summary, low-rank weight matrix representations offer the most significant reductions in pre-training memory and computational overhead, while the integration of orthonormality constraints helps mitigate the performance trade-offs typical of low-rank methods. However, due to high computational costs, previous techniques have failed to maintain orthonormality throughout the entire training duration. The proposed method ensures strict orthonormality at every iteration and, to the best of our knowledge, represents the first approach to achieve this.

A critical factor in low-rank training is the determination of the appropriate rank for weight matrix representations. LR-SMS \cite{fernandezlopez2024fullrankmorelowrankweight} utilized a layer-wise linear schedule, assigning increasing ranks from the input to the output layers. Dynamic-Rank training \cite{shin2025dynamicrankadjustment} interleaved full-rank epochs to restore weight expressivity, while InRank \cite{zhao2023inrank} incrementally expanded the rank by adding rank-one components during optimization. Conversely, CUTTLEFISH \cite{shen2023cuttlefish} monitored rank stabilization during the early stages of training to automate the transition from full-rank to low-rank reparameterization. While effective, these methodologies often introduce operational complexity due to frequent state transitions, which can complicate the training pipeline. In contrast, the proposed method utilizes the spectral energy of weight matrices across layers as a heuristic for rank selection. This approach allows the rank to be determined prior to training, eliminating the need for dynamic transitions between full-parameter and low-rank states.


\section{The TSVD Method}

This section begins with a review of Singular Value Decomposition (SVD) and its truncated variant (TSVD). It then introduces the TSVD layer architecture and the corresponding training methodology, concluding with our proposed adaptive rank selection strategy.

\subsection{Singular Value Decomposition (SVD)}

Singular Value Decomposition (SVD) \cite{Eckart1936} is a fundamental matrix factorization widely used in linear algebra, statistics, and machine learning. At its core, it expresses any matrix as a product of three simpler matrices. For any $m \times n$ matrix $W$, the SVD is

\begin{equation} \label{eq:svd}
W = U \Sigma V^T
\end{equation}

where $U \in \mathbb{R}^{m \times m}$: left singular vectors (orthonormal matrix); $\Sigma \in \mathbb{R}^{m \times n}$: singular values (diagonal matrix with non-negative entries); and $V \in \mathbb{R}^{n \times n}$: right singular vectors (orthonormal matrix).

SVD decomposes a linear transformation into three distinct geometric operations: first, $V^T$ rotates the input into a new coordinate system; second, $\Sigma$ scales the vector along orthogonal axes; and third, $U$ rotates the result into the output's coordinate system.

For any $m \times n$ matrix $W$, Truncated SVD (TSVD) is a low-rank approximation of SVD, defined as

\begin{equation} \label{eq:tsvd}
W \approx  U_k \Sigma_k V_k^T
\end{equation}

where $U_k$  is the first $k$ columns of $U$; $\Sigma_k$ is the diagonal matrix with the top $k$ singular values; and $V_k$ is the first $k$ columns of $V$. Thus, the truncated version effectively discards the smaller singular values and their corresponding directions.

\begin{figure}[t]
\centering
\begin{minipage}[t]{0.485\linewidth}
\vspace{0pt}
\begin{algorithm}[H]
\caption{TSVD Training with Caching}
\label{alg:tsvd_training}
\scriptsize
\begin{algorithmic}[1]
\Require TSVD model with ranks $\{r_\ell\}_{\ell=1}^L$ from Alg.~\ref{alg:spectral_energy_rank}; optimizer
\For{each TSVD layer $\ell$}
    \State initialize $A_{U,\ell}\in\mathbb{R}^{m_\ell\times r_\ell},
    A_{V,\ell}\in\mathbb{R}^{n_\ell\times r_\ell},
    \rho_\ell\in\mathbb{R}^{r_\ell}$
\EndFor
\For{each optimizer step}
    \For{each microbatch}
        \If{cache is populated}
            \State read cached $(U^\top,V,\Sigma)$
        \Else
            \State $U\gets\qf(A_{U})$, $V\gets\qf(A_{V})$ 
            \State $\Sigma\gets\diag(\softplus(\rho))$
            \State cache $(U^\top,V,\Sigma)$
        \EndIf
        \State $W\gets \frac{1}{\sqrt{r}}U\Sigma V^\top$
    \State run forward \& backward pass, and gradient accumulation
    \EndFor
    \State update $\{A_{U},A_{V},\rho\}$
    \State clear all TSVD caches
\EndFor
\end{algorithmic}
\end{algorithm}
\end{minipage}
\hfill
\begin{minipage}[t]{0.485\linewidth}
\vspace{0pt}
\begin{algorithm}[H]
\caption{Spectral-energy Rank Selection}
\label{alg:spectral_energy_rank}
\scriptsize
\begin{algorithmic}[1]
\Require Pretrained full-parameter model with reference weights $\widetilde W_{\ell}$; $\tau\in(0,1]$
\For{each transformer encoder layer $\ell$}
    \For{each weight matrix in $\ell$}
        \State choose reference weight $\widetilde W_{\ell}$
        \State run SVD of $\widetilde W_{\ell}$ to get
        $
        \hat\sigma_{\ell,1}\ge \hat\sigma_{\ell,2}\ge \cdots 
        $
        \State $\displaystyle
        r_\ell \gets \min\Bigl\{k:\frac{\sum_{i=1}^{k}\hat{\sigma}_{\ell,i}^{2}}
        {\sum_j \hat{\sigma}_{\ell,j}^{2}} \ge \tau \Bigr\}$
    \EndFor
\EndFor
\State \Return $\{r_\ell\}_{\ell=1}^L$
\Statex 
\Statex 
\Statex 
\vspace{1ex}
\Statex \emph{Note:} $\qf(\cdot)$ denotes the $Q$ factor of a reduced QR decomposition. The cache is recomputed once per optimizer step and reused across microbatches.
\Statex
\end{algorithmic}
\end{algorithm}
\end{minipage}
\vspace{-3ex}
\end{figure}

\subsection{TSVD Layer}

A TSVD layer approximates a weight matrix $W$ through TSVD ($W \approx U_k \Sigma_k V_k^T$), where $U_k$, $\Sigma_k$, and $V_k^T$ are initialized as unconstrained parameters. A TSVD model is a model that uses TSVD layers. The TSVD training is the process of training a TSVD model. During the forward pass, the TSVD layer enforces the necessary structural constraints: $U_k$ and $V_k^T$ are transformed into orthonormal matrices using QR decomposition \cite{CHAN198767}, while $\Sigma_k$ is constrained to a positive-valued vector.

Low rank approximation of $W$ reduces the number of parameters from $mn$ to $mk + k + kn = k(m+n+1) \approx k(m+n)$. For example, if $W$ is a $1000$ by $800$ matrix, and $k = 128$, the number of parameters reduces from $800,000$ to $128*(1000 + 800 +1) = 230,528$, i.e.\ by approximately 71\%. The benefits are twofold: (1) lower memory requirements for storing parameters, gradients, and optimizer states, and (2) reduced computational cost, as measured in FLOPs, because multiplying input by a low-rank approximation of $W$ is less costly than multiplying the input by $W$ (see App. ~\ref{app:full_vs_rank_k_flops} for details). TSVD layers rely on QR decomposition in the forward pass to enforce orthonormality; The cost, in the number of FLOPS, for a matrix $\in \mathbb{R}^{n \times k}$ is

\begin{equation} \label{eq:qr_flops}
2nk^2 - \nicefrac{2}{3}k^3
\end{equation}

To mitigate the computational overhead of QR decomposition, the structure of gradient accumulation can be leveraged \cite{goodfellow2016deep}. While larger batch sizes improve training stability for large-scale LLMs, hardware constraints often necessitate splitting an effective batch into smaller micro-batches. For instance, an effective batch of 64 may be processed as four mini-batches of 16, with gradients accumulated before a single parameter update. The TSVD training includes a caching mechanism that computes the QR decomposition during the initial mini-batch and reuses the result for all subsequent sub-steps within the same update. This approach ensures that the expensive orthonormalization occurs only once per parameter update, effectively amortizing the computational cost while preserving the benefits of orthonormality. For related ablation studies, see Appendix~\ref{app:grad_accu}.

In a TSVD model, linear layers in the attention and MLP block within the Transformer encoder can be replaced with TSVD layers. Given the high depth of modern LLMs, this significantly reduces the total parameter count without compromising the model’s performance.

Comparison of TSVD layer with related layers is provided in Figure \ref{fig:tsvd_comparison}.

\begin{figure}
    \centering
    \includegraphics[width=\linewidth, trim=1.9cm 0.2cm 1.3cm 0, clip]{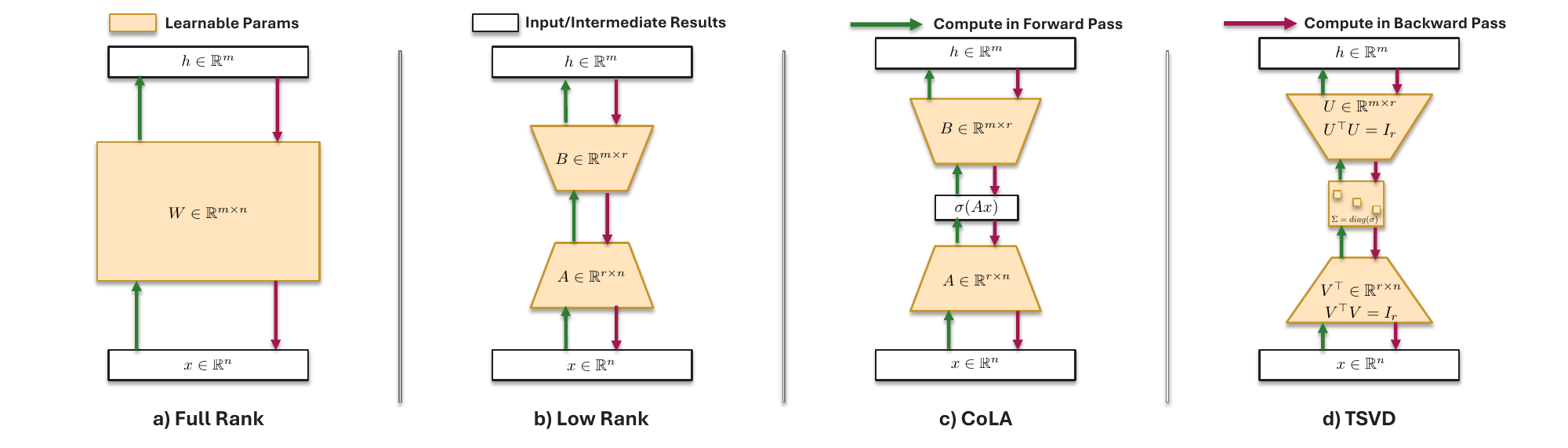}
    
    \caption{Comparison of TSVD Layer with Full-rank, Low-rank, and CoLA Layers}
    \label{fig:tsvd_comparison}

\end{figure}

\subsection{Adaptive Rank Selection}

While low-rank representations have been used previously \citep{mo2025parameter, han2024sltrainsparsepluslowrank, zhao2024galorememoryefficientllmtraining, lialin2023relorahighranktraininglowrank}, they typically rely on fixed, predetermined ranks. Such static configurations may be suboptimal: the intrinsic rank required for high performance likely varies across different components of the transformer architecture.

The TSVD method employs the spectral energy of weight matrices of pretrained models as a heuristic for rank selection. By performing SVD on the weight matrices of a pretrained model, the diagonal matrix $\Sigma$ containing singular values $\sigma_1, \sigma_2, \dots, \sigma_k$ is obtained. The total energy of a full-rank matrix $W$ is defined as the sum of all its singular values. To determine the optimal rank for the low-rank approximation, the cumulative energy for a truncated set of ranks is calculated, and the effective rank $r_{\alpha}(W)$ required to preserve a target energy threshold $\alpha$ (e.g., $95\%$) is then defined as 
\begin{equation} \label{eq:effective_rank}
    r_{\alpha}(W) \ = \ \text{min} \left\{ k \ \middle| \ \frac{\sum_{i=1}^{k} \sigma_i^2}{\sum_{j=1}^{n} \sigma_j^2} \geq \alpha \right\} \ .
\end{equation}
Applying SVD to a pretrained Llama-60M model across energy thresholds of $0.95, 0.75,$ and $0.50$ revealed a consistent pattern. Within each transformer layer, the query and key matrices shared near identical rank requirements, while the value and output projections required a $50\%$ higher rank to maintain the same energy level. Moreover, the MLP layers exhibited the highest complexity, requiring double the rank of the query and key matrices. These observed proportions can be leveraged as a heuristic for rank allocation when targeting specific model compression ratios. The experimental results comparing uniform static ranks to the proposed adaptive energy-based heuristic are detailed in Section~\ref{sec:adap_rank}. Details of TSVD training and adaptive rank selection are provided in Algorithm~\ref{alg:tsvd_training} and \ref{alg:spectral_energy_rank}.

\section{Justification of the Approach}

This section provides conceptual, theoretical, and experimental motivation for the TSVD method presented above. The method of maintaining orthonormality will first be contrasted with previous methods in the literature, and a theoretical justification for it will be given. The heuristic used for adaptive rank selection will then be justified experimentally.

\subsection{Comparison to SOTA in Orthonormal Representations}
\label{subsec:comparison}

Existing approaches to maintaining orthonormality fall into 3 categories: (1) initializing weight matrices as orthonormal, though they inevitably deviate from the manifold during unconstrained optimization \cite{fernandezlopez2024fullrankmorelowrankweight}; (2) employing soft regularizers \cite{yang2020learninglowrankdeepneural}, which introduce sensitive hyperparameters for orthogonality; and (3) periodically projecting weights back onto the orthonormal manifold \cite{mo2025parameter}. The latter approach is most closely related to the TSVD method and is discussed in this subsection.

In particular, LORO \cite{mo2025parameter} utilizes Riemannian optimization to maintain weight constraints. As the mathematical context, a manifold \cite{Burstall_1999} is a topological space that is locally flat but could be curved globally, such as a plane, sphere, torus, or saddle-shaped surface. A Riemannian manifold \cite{Burstall_1999} provides a way to measure distances, angles, etc. using a Riemannian metric. In Riemannian optimization \cite{absil2008optimization, boumal2023introduction}, variables are constrained to lie on a smooth manifold $\mathcal{M}$ (such as a sphere), rather than a flat Euclidean space $\mathbb{R}^n$. The core concepts for Riemannian optimization are thus:

\begin{itemize}

    \item \textbf{The Manifold ($\mathcal{M}$):} The search space.
    \item \textbf{Tangent Space ($T_x\mathcal{M}$):} At any point $x \in \mathcal{M}$, the tangent space is a linear approximation of the manifold's local geometry. Gradients are computed in this space.
    \item \textbf{Riemannian Gradient:} The projection of the standard Euclidean gradient onto the tangent space, denoted as $\operatorname{grad} f(x) = \text{P}_{T_x\mathcal{M}}(\nabla f(x))$.
    \item \textbf{Retraction:} A mapping $R_x: T_x\mathcal{M} \to \mathcal{M}$ that moves an element from the tangent space back onto the manifold.

\end{itemize}

LORO searches on the \textbf{low-rank matrix manifold}, defined as the set of matrices with a fixed rank $r$. Each matrix $W$ has a \textit{unique representation}, eliminating redundancy during the search process, but the retraction operation requires doing an SVD, which is computationally expensive; The cost, in the number of FLOPS, for a matrix $W \in \mathbb{R}^{n \times m}$ is

\begin{equation} \label{eq:svd_flops}
14mn^2 + 8n^3
\end{equation}

To manage this cost, LORO executes the SVD retraction sparingly, typically once every $K = 500$ steps.  In contrast, TSVD optimizes over the \textbf{Stiefel manifold}, i.e.\ the set of all matrices with orthonormal columns, denoted as:

\begin{equation} \label{eq:stiefel}
St(n,r) = \{ X \in \mathbb{R}^{n \times r} \mid X^\top X = I_r \}
\end{equation}

This approach may result in \textit{non-unique representations}, since different factors ($U, \Sigma, V$) can represent the same product $W$, which could be a limitation. However, QR decomposition can be used for the retraction operation, as it is significantly more efficient than SVD; See Equation~\ref{eq:qr_flops}.

The assumption in TSVD method is that for LLM pretraining, the computational efficiency of the Stiefel manifold outweighs the theoretical benefits of unique representation on the fixed-rank manifold. Given the massive scale of current and future LLMs, the significant reduction in retraction overhead indeed justifies the inherent redundancy in factorized representation, making it a more practical approach for large-scale optimization than other current methods.

\subsection{Theoretical Justification of Orthonormal Parameterization}

The orthonormal TSVD parameterization \(W=\frac{1}{\sqrt r}U\Sigma V^\top\) yields four coupled advantages over unconstrained low-rank factorization. First, it cleanly separates subspace and scale: \(U\) and \(V\) determine only the left and right rank-\(r\) subspaces, while \(\Sigma\) alone determines the spectrum of \(W\). Second, because the basis matrices have orthonormal columns, they have unit gain, which keeps activation and gradient magnitudes controlled by \(\Sigma\) rather than by hidden growth in the factors. Third, the expansion \(W=\frac{1}{\sqrt r}\sum_{i=1}^r \sigma_i u_i v_i^\top\) makes the rank-\(1\) modes independent, so each \(\sigma_i\) directly governs the strength of one orthogonal mode. Fourth, orthonormality is enforced throughout training, eliminating the need for additional orthonormality penalties and their associated hyperparameters. Together, these properties make TSVD method a theoretically well-founded and better-conditioned low-rank parameterization than commonly used low-rank adaptation methods.
 See App.~\ref{app:tsvd_orthonormality} for details.

\subsection{Experimental Derivation of Adaptive Rank Selection Heuristic}
\label{sec:adap_rank}

The mechanism in TSVD method on adapting rank selection automatically is based on an experimental discovery of a scaling law, as presented in this section.

\paragraph{Adaptive rank selection through weight matrix energy} 
Prior work \citep{han2024sltrainsparsepluslowrank,mo2025parameter, liu2025colacomputeefficientpretrainingllms} using low-rank representations of weight matrices typically relies on fixed ranks, independent of the role of the weight matrix in the model. However, this choice may be suboptimal, because weight matrices in different components of a transformer decoder layer can require different ranks to perform well.
To motivate the adaptive rank selection used for the TSVD method, an analysis of the effective rank is performed (Eq.~\eqref{eq:effective_rank}) in individual weight matrices of linear transformations within the transformer decoder layer. Those are the query, key, value and out projection within the attention mechanism as well as the gate, up and down projections in the subsequent gated MLP.
Effective ranks are first examined in full-parameter models from the experimental suite, spanning various model sizes and training token counts. This analysis is then extended to widely used open-weight model families, demonstrating that the findings remain predictive even in industry-scale settings.
The results show that effective ranks determined for small models can be used to determine effective ranks at larger model sizes given the correct scaling of its hyperparameters.
Finally, these insights are applied to adaptive rank selection within the TSVD method.

\paragraph{Effective ranks for benchmark models.}
The effective ranks for all models in the benchmark suite are provided in Fig.~\ref{fig:effective_rank_main} in the Appendix.
Across layers, the effective ranks of different projection types (query, key, etc.) are largely consistent, with the exception of the earliest layers, where the ranks are slightly lower.
Motivated by this stability, the effective ranks were averaged across layers to enable comparisons between models of different sizes.

Fig.~\ref{fig:llama_energy_sweep} (left) shows the effective ranks across the different model sizes in the benchmark. The main observation is that they scale approximately linearly with model size.
Notably, the major architectural parameters (Tab.~\ref{tab:major_parameters}), that is, the number of transformer layers $n_{\text{layers}}$, the number of heads per layer $n_{\text{heads}}$, the hidden dimension $d_1$, and the MLP intermediate dimension $d_2$, scale uniformly across the models.
When normalizing effective ranks by the out projection (Fig.~\ref{fig:llama_energy_sweep} right), the relative proportions between ranks of different projections are nearly constant across model sizes.
Overall, these results indicate that effective ranks measured on smaller models can be used to predict those of larger models, provided that the major architectural parameters are scaled proportionally.



\begin{table}[t]
\centering
\setlength{\tabcolsep}{6pt}
\caption{Major Architectural Parameters of Models in the Benchmark Suite. Grey Numbers Denote Multiples Compared to Smallest Model within the Same Model Family. Most Parameters are Scaled Proportionally going from Smaller to Larger Models.}
\label{tab:major_parameters}
\begin{tabular}{cccccccccccc}
\toprule
Model size / \textcolor{gray}{factor} &  & 60m   &  & 130m  & \textcolor{gray}{$\times$ 60m} &  & 350m  & \textcolor{gray}{$\times$ 60m} &  & 1.3B    & \textcolor{gray}{$\times$ 60m} \\ \midrule
$d_1$    &  & 512  &  & 768  & \textcolor{gray}{1.5}         &  & 1024  & \textcolor{gray}{2}            &  & 2048  & \textcolor{gray}{4}            \\
$d_2$       &  & 1376 &  & 2048 & \textcolor{gray}{1.5}         &  & 2736 & \textcolor{gray}{2}          &  & 5461 & \textcolor{gray}{4}            \\

$n_{\text{layers}}$     &  & 8    &  & 12    & \textcolor{gray}{1.5}         &  & 24    & \textcolor{gray}{3}          &  & 24    & \textcolor{gray}{3}          \\
$n_{\text{heads}}$   &  & 8    &  & 12    & \textcolor{gray}{1.5}         &  & 16    & \textcolor{gray}{2}            &  & 32    & \textcolor{gray}{4}            \\
\bottomrule
\end{tabular}
\end{table}

\begin{figure}[t]

    \centering
    \includegraphics[width=\linewidth, trim=0 0.2cm 0 0, clip]{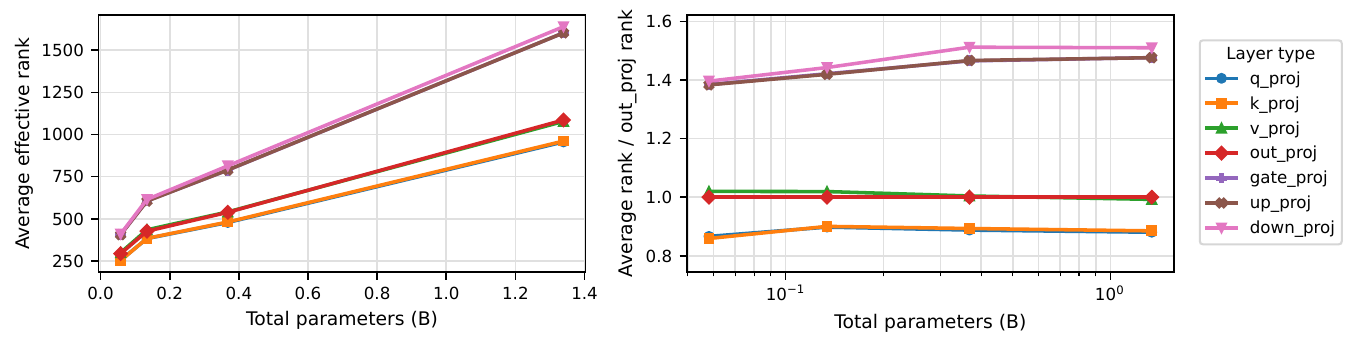}
    
    \caption{Left: Effective Rank of Different Weight Matrices within a Transformer Layer Averaged over Layers for Different Models within the Benchmark Suite. Right: Effective Rank Normalized by the out projection for Each Model Size. Proportions Remaining Nearly Constant Across Scales.}
    \label{fig:llama_energy_sweep}

\end{figure}

\paragraph{Effective ranks for open-weights model families.}
To assess the generality of the findings and highlight potential deviations, the analysis was extended to large, publicly available model families:
Llama-2 \citep{touvron2023llama}, Llama-3.1 \citep{dubey2024llama}, and Olmo 3 \citep{olmo2026olmo3}.
Their normalized effective ranks are shown in Fig.~\ref{fig:rank_comparison}, and the corresponding architectural parameters are listed in Tab.~\ref{tab:parameters_main_families}.
In these experiments, the query heads and key/value heads were treated separately, because many modern architectures employ grouped query attention (GQA) \citep{ainslie-etal-2023-gqa}.

For LLaMA-2 7B and 13B, the same consistent scaling behavior was observed as in the previous analysis on the benchmark models.  This result was expected given that all major architectural parameters are scaled proportionally.
In contrast, the 70B variant adopts GQA, reducing the number of key and value heads by a factor of four. 
This change is directly reflected in the effective ranks, which decrease accordingly relative to the query and output projections.
Additionally, the MLP intermediate dimension is scaled disproportionately, resulting in higher effective ranks for the gate, up, and down projections.
For LLaMA-3.1, the number of key and value heads remains constant while most other parameters double from 8B to 70B.
Correspondingly, the normalized effective ranks of key and value heads are approximately halved.
The transition from OLMo 3 7B to 32B exhibits a similar pattern to that observed between LLaMA-2 13B and 70B.
Detailed results for the effective ranks of individual layers are provided in Fig.~\ref{fig:effective_rank_llama2}~-~\ref{fig:effective_rank_olmo3} in the Appendix.

\begin{table}
\vspace{-2ex}
\centering
\small
\setlength{\tabcolsep}{3.3pt}
\caption{Major Architectural Parameters of Open-weights Model Families. Grey numbers Denote Multiples Compared to Smallest Model within the Same Model Family. Except for LLaMA-2 7B to 13B, parameters change non-proportionally. Most notable, $n_{\text{heads,kv}}$ often decrease due to GQA.}
\label{tab:parameters_main_families}
\begin{tabular}{c c c cc cc c c cc c c cc}
\toprule
Parameter & 
& \multicolumn{5}{c}{\hspace{0.2cm}LLaMA-2} &
& \multicolumn{3}{c}{\hspace{0.2cm}LLaMA-3.1} & 
& \multicolumn{3}{c}{\hspace{0.2cm}OLMo 3} \\

\cmidrule(r){1-2} \cmidrule(lr){3-8} \cmidrule(lr){9-12} \cmidrule(l){13-15}

num\_parameters & 
& 7B
& 13B & \textcolor{gray}{$\times$ 7B}
& 70B & \textcolor{gray}{$\times$ 7B} &
& 8B
& 70B & \textcolor{gray}{$\times$ 8B} &
& 7B
& 32B & \textcolor{gray}{$\times$ 7B} \\

\cmidrule{1-15}

$d_1$ &
& 4096 
& 5120 & \textcolor{gray}{1.25}
& 8192 & \textcolor{gray}{2} &
& 4096 
& 8192 & \textcolor{gray}{2} &
& 4096 
& 5120 & \textcolor{gray}{1.25} \\

$d_2$ &
& 11008 
& 13824 & \textcolor{gray}{1.26}
& 28672 & \textcolor{gray}{2.6} &
& 14336 
& 28672 & \textcolor{gray}{2} &
& 11008 
& 27648 & \textcolor{gray}{2.5} \\

$n_{\text{layers}}$ &
& 32 
& 40 & \textcolor{gray}{1.25}
& 80 & \textcolor{gray}{2.5} &
& 32 
& 80 & \textcolor{gray}{2.5} &
& 32 
& 64 & \textcolor{gray}{2} \\

$n_{\text{heads,q}}$ &
& 32 
& 40 & \textcolor{gray}{1.25}
& 64 & \textcolor{gray}{2} &
& 32 
& 64 & \textcolor{gray}{2} &
& 32 
& 40 & \textcolor{gray}{1.25} \\

$n_{\text{heads,kv}}$ &
& 32 
& 40 & \textcolor{gray}{1.25}
& 8 & \textcolor{gray}{0.25} &
& 8 
& 8 & \textcolor{gray}{1} &
& 32 
& 8 & \textcolor{gray}{0.25} \\

\bottomrule
\end{tabular}
\end{table}

\begin{figure}
    
    \centering
    \includegraphics[width=\linewidth]{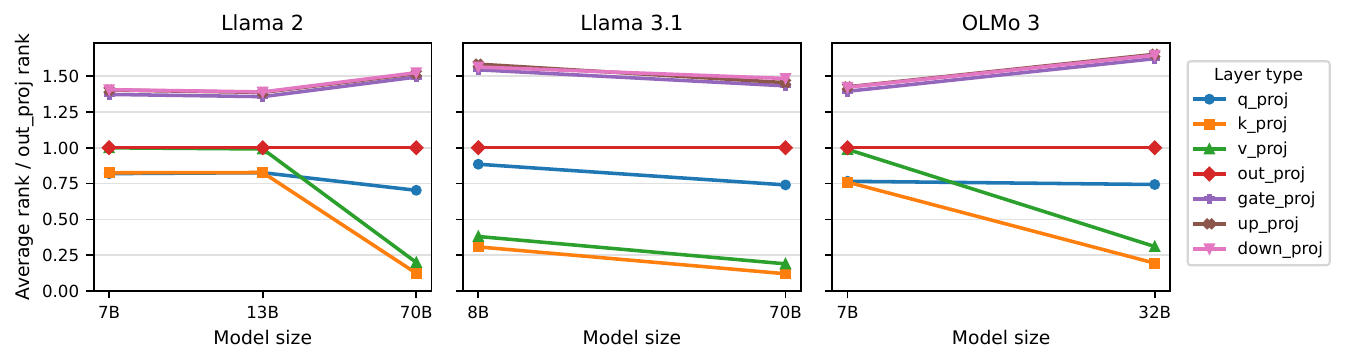}
    \caption{Normalized Effective Rank of Different Model Sizes within Different Model Families. Ranks Remain Similar Except that Key and Value Projection Ranks Often Decrease due to GQA.}
    \label{fig:rank_comparison}

\end{figure}

Overall, these results confirm that effective ranks measured on smaller models are predictive of those in larger models.
Moreover, deviations from proportional scaling, such as the adoption of GQA, can be systematically accounted for, suggesting that adaptive rank selection remains practical even under architectural changes. The performance of adaptive rank selection is evaluated in the next section.

\section{Experiments} \label{sec:experiments}

This section presents an experimental evaluation of TSVD models across a range of transformer model scales and training settings. 
Pretraining performance is first compared to full-parameter and alternative low-rank baselines, followed by a computational validation of adaptive rank selection. 
Next, the computational trade-offs of TSVD training are characterized through gradient accumulation, highlighting its efficiency in large-scale training scenarios.

\subsection{Pretraining Performance Across Model Scales} \label{sec:benchmark}

A series of LLaMA-style Large Language Models (LLMs) \cite{touvron2023llamaopenefficientfoundation} were pretrained with scales ranging from 60M to 1.3B parameters. The experimental framework strictly followed the benchmark protocols established in the literature \cite{zhao2024galorememoryefficientllmtraining,han2024sltrainsparsepluslowrank}. 
However, the benchmarks were ported to the HuggingFace ecosystem \cite{wolf-etal-2020-transformers}, using \texttt{accelerate} for launching distributed runs across GPUs.
Four architectural configurations were evaluated: dense full-parameter baselines, standard low-rank approximations of linear and attention layers, CoLA-based linear layers, and the proposed TSVD layers. This comparative study makes it possible to isolate the performance gains of TSVD layers against both traditional factorization and recent state-of-the-art reparameterization methods.

Models were trained on the Colossal Clean Crawled Corpus (C4) \cite{c4, dodge2021documentinglargewebtextcorpora}, a massive web-scale dataset derived from Common Crawl. To ensure data quality, C4 employs rigorous heuristic filtering to eliminate low-quality, redundant, and offensive content. Comprising of 365 million documents and 150 billion tokens, C4 serves as a standard benchmark for pretraining prominent models like T5 \cite{dodge2021documentinglargewebtextcorpora}.

The results are included in Table~\ref{1b-results-table}. In terms of perplexity, TSVD models outperformed 60m and 130m full parameter models and matched 350m and 1.3B full parameter models. TSVD models contained between $24\%$ and $51\%$ fewer parameters, yielding a corresponding reduction in the memory footprint for parameters, gradients, and optimizer states. Note that while the full-rank, low-rank, and CoLA experiments were designed to replicate results reported in the literature, the modern Hugging Face implementation yielded significantly improved outcomes. These enhancements may be attributed to several factors, including modernized weight initialization schemes, refined tokenization processes, and the utilization of more stable optimization hyperparameters.

\begin{table}[b]
\vspace{-2ex}
\caption{Results of Pretraining on the C4 Dataset, Including Validation Perplexity (PPL), Number of Parameters in Millions (Param), and the Estimated Total Memory Cost in GB (Mem), Which Included Memory for Parameters, Gradients, and Activations. The Symbols $r$, $d_1$, and $d_2$ Denote the Rank Used in Low-rank and CoLA, and the Hidden and Intermediate-dimension of the Model.}
\label{1b-results-table}
\centering
\footnotesize
\setlength{\tabcolsep}{3pt}
\begin{tabular}{lcccccccccccc}
\toprule 
\multicolumn{1}{c}{Model (\# Tokens)} 
& \multicolumn{3}{c}{LLaMA-60M (1.1B)} 
& \multicolumn{3}{c}{LLaMA-130M (2.2B)} 
& \multicolumn{3}{c}{LLaMA-350M (6.4B)} 
& \multicolumn{3}{c}{LLaMA-1.3B (13.1B)} \\

\multicolumn{1}{c}{$r / d_1 /d_2$}
& \multicolumn{3}{c}{128 / 512 / 1376}
& \multicolumn{3}{c}{256 / 768 / 2048}
& \multicolumn{3}{c}{256 / 1024 / 2736}
& \multicolumn{3}{c}{512 / 2048 / 5461} \\

\cmidrule(lr){1-1} \cmidrule(lr){2-4} \cmidrule(lr){5-7} \cmidrule(lr){8-10} \cmidrule(lr){11-13}
Method 
& PPL $\downarrow$ & Param & Mem $\downarrow$ 
& PPL $\downarrow$ & Param & Mem $\downarrow$
& PPL $\downarrow$ & Param & Mem $\downarrow$
& PPL $\downarrow$ & Param & Mem $\downarrow$ \\
\midrule
Full-rank
& 15.93 & 58  & 0.43 
& 12.07 & 134  & 1.00 
& \textbf{9.15} & 368 & 2.74
& 7.67 & 1,339 & 9.98 \\

Low-rank 
& 14.98 & \textbf{42}  & \textbf{0.31} 
& 11.72 & 94  & 0.70 
& 9.23 & 185 & 1.37 
& 7.71 & \textbf{609} & \textbf{4.53} \\

CoLA 
& 14.83 & 45  & 0.33 
& 11.57 & 101 & 0.75 
& 9.22 & 222 & 1.65 
& 7.74 & 760 & 5.66 \\

TSVD 
& \textbf{14.15} & 44  & 0.32 
& \textbf{11.30} & \textbf{90}  & \textbf{0.67}
& 9.16 & \textbf{176} & \textbf{1.31} 
& \textbf{7.59} & 650 & 4.84 \\

\bottomrule
\end{tabular}
\end{table}


\subsection{Adaptive Rank Selection for TSVD}

The impact of adaptive rank selection in TSVD model pretraining was evaluated by running two pretraining iterations across three model scales: Llama-60m, Llama-130m, and Llama-350m. 
The first run utilized static ranks of 128, 256, and 256, respectively, following the configurations established in the literature \cite{mo2025parameter}. In the second run, adaptive ranks derived from the energy analysis were used.

The results, summarized in Tab. ~\ref{adaptive-rank-table}, shows that the adaptive rank allocation enhances training performance for Llama-60m and Llama-130m compared to the static baseline. There is a slight degradation for Llama-350m, which can be attributed to a gradient spike in the early phase of training.

\begin{table}
\caption{Comparison of Static vs. Adaptive Rank Configurations Used During Pretraining. In the 'Adaptive Rank' Column, the Triplet Represents the Assigned Ranks for the Query/Key, Value/Output Projection, and Linear Layers, Respectively. Values in the 'Static-Rank' Column are Applied Uniformly Across all Transformer Layers. 'PPL' Denotes Perplexity.}
\label{adaptive-rank-table}
\centering
\small
\setlength{\tabcolsep}{13pt}
\begin{tabular}{lcccc}
\toprule 
\multicolumn{1}{c}{Models} 
& \multicolumn{1}{c}{Static-rank}
& \multicolumn{1}{c}{Adaptive-rank}
& \multicolumn{1}{c}{Static-rank PPL}
& \multicolumn{1}{c}{Adaptive-rank PPL}\\
\midrule
Llama-60m
& 128 
& 88/132/184
& 14.43
& 14.15 \\

Llama-130m 
& 256
& 136/204/288
& 11.35
& 11.30 \\

Llama-350m
& 256 
& 136/204/288
& 9.12
& 9.16 \\

\bottomrule
\end{tabular}

\end{table}

\subsection{Gradient Accumulation}
\label{app:grad_accu}

To compare the full-parameter and TSVD versions of the 60M and 130M Llama models, four identical training experiments were run, varying the gradient accumulation steps ($G \in \{1, 8, 16, 32\}$). At $G = 1$ (a single mini-step), the TSVD model requires a QR decomposition for every update. Without the benefit of a reused cache, the TSVD model was expected to be slower than the full-parameter baseline. However, as gradient accumulation steps increase, the computational cost of the QR decomposition is amortized across multiple mini-steps. This makes the TSVD model a more practical choice for scaled training, as it outperforms the full-parameter version in wall-clock time.

These results are detailed in Figure ~\ref{fig:grad_accu}. The plots demonstrate that as gradient accumulation steps increase, the TSVD model achieves greater gains in training efficiency compared to the full-parameter model. This trend is further magnified in larger model architectures. 

\begin{figure}
    \centering
    \includegraphics[width=\linewidth]{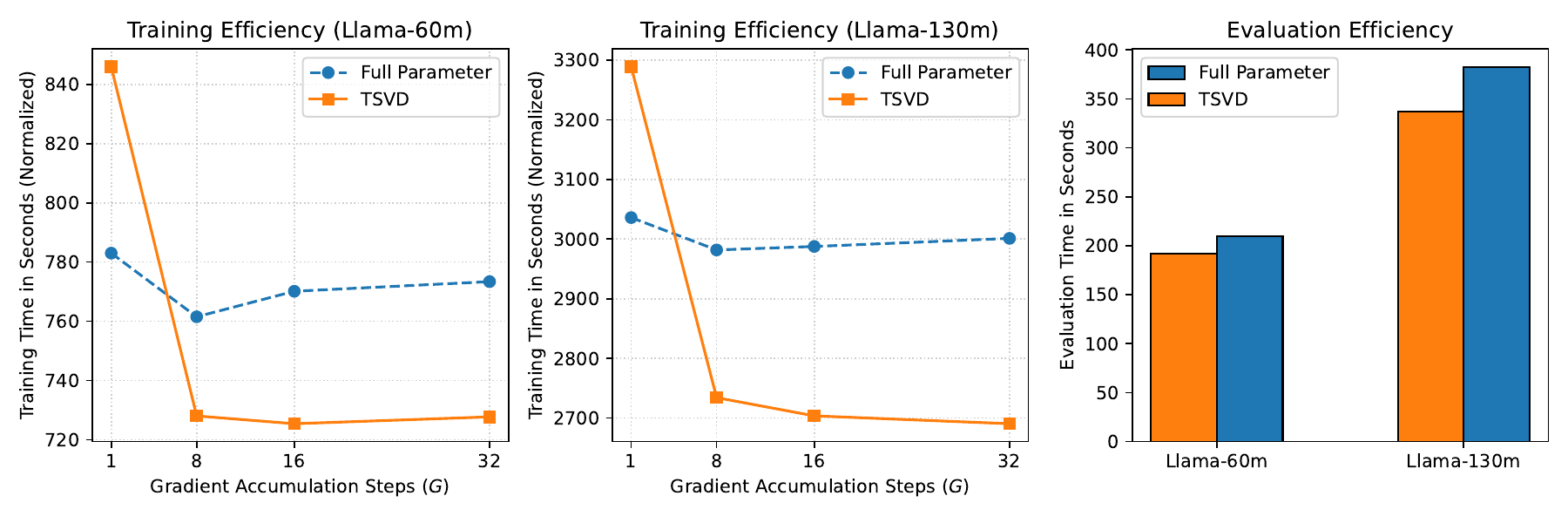}
    
    \caption{Effect of Gradient Accumulation on Training/Evaluation Runtime}
    \label{fig:grad_accu}
    
\end{figure}


\section{Discussion and Future Work} \label{sec:discussion}

The TSVD method opens several avenues for extending orthonormal low-rank training. A natural next step is to evaluate TSVD method on larger LLMs and much larger datasets to assess how its compute and memory efficiency, as well as model performance, scale with model size. Such experiments would also test whether the spectral-energy rank heuristic observed in smaller models remains predictive at larger scales, particularly under architectural variations such as grouped-query attention. A second direction is LLM fine-tuning. After pretraining, a model is further trained on a smaller, task-specific dataset to improve performance on particular tasks. We aim to evaluate how TSVD models compare to full-parameter and other low-rank approaches in terms of fine-tuning performance. A third direction is to investigate potential regularization strategies for $\Sigma$, with the goal of driving some singular values to zero during training, thereby enabling further compression beyond the prescribed rank. Finally, we aim to explore whether orthonormal matrices learned in smaller models can be used to initialize the corresponding matrices in larger models, thereby accelerating the training.


\section{Conclusion}

This paper introduced the TSVD method, a novel architectural modification, to reduce parameter count and computational complexity of LLM pretraining. To the best available knowledge, TSVD is the first method to maintain strict weight orthonormality throughout the training trajectory without incurring prohibitive overhead, achieved through an optimized caching mechanism. Empirical results demonstrate that TSVD models consistently match or exceed the performance of their full-parameter counterparts. Furthermore, by analyzing the spectral energy of pretrained models, a rank selection heuristic was derived that frequently outperforms traditional static rank selection strategies. To complement these experimental validations, a theoretical framework illustrates the benefits of orthonormality for stable and efficient language modeling. TSVD method can thus serve as a foundation for more efficient pretraining, and potentially better LLMs.

\newpage

\nocite{*}  

\bibliographystyle{plain} 
\bibliography{references}

@misc{wei2024investigatinglowranktrainingtransformer,
      title={Investigating Low-Rank Training in Transformer Language Models: Efficiency and Scaling Analysis}, 
      author={Xiuying Wei and Skander Moalla and Razvan Pascanu and Caglar Gulcehre},
      year={2024},
      eprint={2407.09835},
      archivePrefix={arXiv},
      primaryClass={cs.CL},
      url={https://arxiv.org/abs/2407.09835}, 
}

@misc{li2025lostlowranksparsepretraining,
      title={LOST: Low-rank and Sparse Pre-training for Large Language Models}, 
      author={Jiaxi Li and Lu Yin and Li Shen and Jinjin Xu and Liwu Xu and Tianjin Huang and Wenwu Wang and Shiwei Liu and Xilu Wang},
      year={2025},
      eprint={2508.02668},
      archivePrefix={arXiv},
      primaryClass={cs.LG},
      url={https://arxiv.org/abs/2508.02668}, 
}

@misc{fernandezlopez2024fullrankmorelowrankweight,
      title={Full-Rank No More: Low-Rank Weight Training for Modern Speech Recognition Models}, 
      author={Adriana Fernandez-Lopez and Shiwei Liu and Lu Yin and Stavros Petridis and Maja Pantic},
      year={2024},
      eprint={2410.07771},
      archivePrefix={arXiv},
      primaryClass={cs.SD},
      url={https://arxiv.org/abs/2410.07771}, 
}

@misc{yang2020learninglowrankdeepneural,
      title={Learning Low-rank Deep Neural Networks via Singular Vector Orthogonality Regularization and Singular Value Sparsification}, 
      author={Huanrui Yang and Minxue Tang and Wei Wen and Feng Yan and Daniel Hu and Ang Li and Hai Li and Yiran Chen},
      year={2020},
      eprint={2004.09031},
      archivePrefix={arXiv},
      primaryClass={cs.LG},
      url={https://arxiv.org/abs/2004.09031}, 
}

@inproceedings{mo2025parameter,
               title={Parameter and memory efficient pretraining via low-rank riemannian optimization},
               author={Mo, Zhanfeng and Huang, Long-Kai and Pan, Sinno Jialin},
               booktitle={The Thirteenth International Conference on Learning Representations},
               year={2025}
}

@inproceedings{khodak2021initializationfactorizedlayers,
               title={Initialization and Regularization of Factorized Neural Layers},
               author={Khodak, Mikhail and Tenenholtz, Neil and Mackey, Lester and Fusi, Nicolo},
               booktitle={International Conference on Learning Representations (ICLR)},
               year={2021}
}

@article{wang2025boostlowrankllm,
         title={BOOST: BOttleneck-Optimized Scalable Training Framework for Low-Rank Large Language Models},
         author={Wang, Zhengyang and Liu, Ziyue and Zhang, Ruijie and Maurya, Avinash and Hovland, Paul and Nicolae, Bogdan and Cappello, Franck and Zhang, Zheng},
         journal={arXiv preprint arXiv:2512.12131},
         year={2025}
}

@article{shin2025dynamicrankadjustment,
         title={Dynamic Rank Adjustment for Accurate and Efficient Neural Network Training},
         author={Shin, Hyuntak and Jung, Aecheon and Lee, Sunwoo and Hong, Sungeun},
         journal={arXiv preprint arXiv:2508.08625},
         year={2025}
}

@article{zhao2023inrank,
         title={InRank: Incremental Low-Rank Learning},
         author={Zhao, Jiawei and Zhang, Yifei and Chen, Beidi and Sch{\"a}fer, Florian and Anandkumar, Anima},
         journal={arXiv preprint arXiv:2306.11250},
         year={2023},
         doi={10.48550/arXiv.2306.11250},
         url={https://doi.org/10.48550/arXiv.2306.11250}
}

@inbook{Coquelin_2024,
        title={Harnessing Orthogonality to Train Low-Rank Neural Networks},
        ISBN={9781643685489},
        ISSN={1879-8314},
        url={http://dx.doi.org/10.3233/FAIA240729},
        DOI={10.3233/faia240729},
        booktitle={ECAI 2024},
        publisher={IOS Press},
        author={Coquelin, Daniel and Flügel, Katharina and Weiel, Marie and Kiefer, Nicholas and Debus, Charlotte and Streit, Achim and Götz, Markus},
        year={2024},
        month=oct
}

@misc{han2024sltrainsparsepluslowrank,
      title={SLTrain: a sparse plus low-rank approach for parameter and memory efficient pretraining}, 
      author={Andi Han and Jiaxiang Li and Wei Huang and Mingyi Hong and Akiko Takeda and Pratik Jawanpuria and Bamdev Mishra},
      year={2024},
      eprint={2406.02214},
      archivePrefix={arXiv},
      primaryClass={cs.LG},
      url={https://arxiv.org/abs/2406.02214}, 
}

@misc{lialin2023relorahighranktraininglowrank,
      title={ReLoRA: High-Rank Training Through Low-Rank Updates}, 
      author={Vladislav Lialin and Namrata Shivagunde and Sherin Muckatira and Anna Rumshisky},
      year={2023},
      eprint={2307.05695},
      archivePrefix={arXiv},
      primaryClass={cs.CL},
      url={https://arxiv.org/abs/2307.05695}, 
}

@misc{zhao2024galorememoryefficientllmtraining,
      title={GaLore: Memory-Efficient LLM Training by Gradient Low-Rank Projection}, 
      author={Jiawei Zhao and Zhenyu Zhang and Beidi Chen and Zhangyang Wang and Anima Anandkumar and Yuandong Tian},
      year={2024},
      eprint={2403.03507},
      archivePrefix={arXiv},
      primaryClass={cs.LG},
      url={https://arxiv.org/abs/2403.03507}, 
}

@article{shen2023cuttlefish,
         title={Cuttlefish: Low-Rank Model Training without All the Tuning},
         author={Shen, Yifan and others},
         journal={arXiv preprint arXiv:2305.02538},
         year={2023},
         url={https://arxiv.org/abs/2305.02538}
}

@article{article,
         author = {Gu, Jiarui},
         year = {2025},
          month = {11},
          pages = {144-155},
          title = {From GPT to LLaMA: Tracing the Growth of Large Language Models},
          volume = {142},
          journal = {Theoretical and Natural Science},
          doi = {10.54254/2753-8818/2025.DL29003}
}

@misc{minaee2025largelanguagemodelssurvey,
      title={Large Language Models: A Survey}, 
      author={Shervin Minaee and Tomas Mikolov and Narjes Nikzad and Meysam Chenaghlu and Richard Socher and Xavier Amatriain and Jianfeng Gao},
      year={2025},
      eprint={2402.06196},
      archivePrefix={arXiv},
      primaryClass={cs.CL},
      url={https://arxiv.org/abs/2402.06196}, 
}

@misc{liu2025colacomputeefficientpretrainingllms,
      title={CoLA: Compute-Efficient Pre-Training of LLMs via Low-Rank Activation}, 
      author={Ziyue Liu and Ruijie Zhang and Zhengyang Wang and Mingsong Yan and Zi Yang and Paul Hovland and Bogdan Nicolae and Franck Cappello and Sui Tang and Zheng Zhang},
      year={2025},
      eprint={2502.10940},
      archivePrefix={arXiv},
      primaryClass={cs.LG},
      url={https://arxiv.org/abs/2502.10940}, 
}

@misc{hu2021loralowrankadaptationlarge,
      title={LoRA: Low-Rank Adaptation of Large Language Models}, 
      author={Edward J. Hu and Yelong Shen and Phillip Wallis and Zeyuan Allen-Zhu and Yuanzhi Li and Shean Wang and Lu Wang and Weizhu Chen},
      year={2021},
      eprint={2106.09685},
      archivePrefix={arXiv},
      primaryClass={cs.CL},
      url={https://arxiv.org/abs/2106.09685}, 
}

@misc{shamshoum2025compactcompressedactivationsmemoryefficient,
      title={CompAct: Compressed Activations for Memory-Efficient LLM Training}, 
      author={Yara Shamshoum and Nitzan Hodos and Yuval Sieradzki and Assaf Schuster},
      year={2025},
      eprint={2410.15352},
      archivePrefix={arXiv},
      primaryClass={cs.LG},
      url={https://arxiv.org/abs/2410.15352}, 
}

@misc{kamalakara2022exploringlowranktraining,
      title={Exploring Low Rank Training of Deep Neural Networks}, 
      author={Siddhartha Rao Kamalakara and Acyr Locatelli and Bharat Venkitesh and Jimmy Ba and Yarin Gal and Aidan N. Gomez},
      year={2022},
      eprint={2209.13569},
      archivePrefix={arXiv},
      primaryClass={cs.LG},
      url={https://arxiv.org/abs/2209.13569}, 
}

@misc{khodak2022initializationregularizationfactorizedneural,
      title={Initialization and Regularization of Factorized Neural Layers}, 
      author={Mikhail Khodak and Neil Tenenholtz and Lester Mackey and Nicolò Fusi},
      year={2022},
      eprint={2105.01029},
      archivePrefix={arXiv},
      primaryClass={stat.ML},
      url={https://arxiv.org/abs/2105.01029}, 
}

@misc{savostianova2023robustlowranktrainingapproximate,
      title={Robust low-rank training via approximate orthonormal constraints}, 
      author={Dayana Savostianova and Emanuele Zangrando and Gianluca Ceruti and Francesco Tudisco},
      year={2023},
      eprint={2306.01485},
      archivePrefix={arXiv},
      primaryClass={cs.LG},
      url={https://arxiv.org/abs/2306.01485}, 
}

@misc{sui2024elrtefficientlowranktraining,
      title={ELRT: Efficient Low-Rank Training for Compact Convolutional Neural Networks}, 
      author={Yang Sui and Miao Yin and Yu Gong and Jinqi Xiao and Huy Phan and Bo Yuan},
      year={2024},
      eprint={2401.10341},
      archivePrefix={arXiv},
      primaryClass={cs.CV},
      url={https://arxiv.org/abs/2401.10341}, 
}

@misc{schotthofer2022lowranklotteryticketsfinding,
      title={Low-rank lottery tickets: finding efficient low-rank neural networks via matrix differential equations}, 
      author={Steffen Schotthöfer and Emanuele Zangrando and Jonas Kusch and Gianluca Ceruti and Francesco Tudisco},
      year={2022},
      eprint={2205.13571},
      archivePrefix={arXiv},
      primaryClass={cs.LG},
      url={https://arxiv.org/abs/2205.13571}, 
}

@misc{zhao2026surveylargelanguagemodels,
      title={A Survey of Large Language Models}, 
      author={Wayne Xin Zhao and Kun Zhou and Junyi Li and Tianyi Tang and Xiaolei Wang and Yupeng Hou and Yingqian Min and Beichen Zhang and Junjie Zhang and Zican Dong and Yifan Du and Chen Yang and Yushuo Chen and Zhipeng Chen and Jinhao Jiang and Ruiyang Ren and Yifan Li and Xinyu Tang and Zikang Liu and Peiyu Liu and Jian-Yun Nie and Ji-Rong Wen},
      year={2026},
      eprint={2303.18223},
      archivePrefix={arXiv},
      primaryClass={cs.CL},
      url={https://arxiv.org/abs/2303.18223}, 
}

@misc{kaplan2020scalinglawsneurallanguage,
      title={Scaling Laws for Neural Language Models}, 
      author={Jared Kaplan and Sam McCandlish and Tom Henighan and Tom B. Brown and Benjamin Chess and Rewon Child and Scott Gray and Alec Radford and Jeffrey Wu and Dario Amodei},
      year={2020},
      eprint={2001.08361},
      archivePrefix={arXiv},
      primaryClass={cs.LG},
      url={https://arxiv.org/abs/2001.08361}, 
}

@misc{hoffmann2022trainingcomputeoptimallargelanguage,
      title={Training Compute-Optimal Large Language Models}, 
      author={Jordan Hoffmann and Sebastian Borgeaud and Arthur Mensch and Elena Buchatskaya and Trevor Cai and Eliza Rutherford and Diego de Las Casas and Lisa Anne Hendricks and Johannes Welbl and Aidan Clark and Tom Hennigan and Eric Noland and Katie Millican and George van den Driessche and Bogdan Damoc and Aurelia Guy and Simon Osindero and Karen Simonyan and Erich Elsen and Jack W. Rae and Oriol Vinyals and Laurent Sifre},
      year={2022},
      eprint={2203.15556},
      archivePrefix={arXiv},
      primaryClass={cs.CL},
      url={https://arxiv.org/abs/2203.15556}, 
}

@Article{Eckart1936,
         author={Eckart, Carl and Young, Gale},
         title={The approximation of one matrix by another of lower rank},
         journal={Psychometrika},
         year={1936},
         month={Sep},
         day={01},
         volume={1},
         number={3},
         pages={211-218},
         issn={1860-0980},
         doi={10.1007/BF02288367},
         url={https://doi.org/10.1007/BF02288367}
}

@inbook{Burstall_1999,
        place={Cambridge},
        series={London Mathematical Society Lecture Note Series},
        title={Basic Riemannian geometry},
        booktitle={Spectral Theory and Geometry},
        publisher={Cambridge University Press},
        author={Burstall, F. E.},
        editor={Davies, E. Brian and Safarov, YuriEditors},
        year={1999},
        pages={1–29},
        collection={London Mathematical Society Lecture Note Series}
}

@book{absil2008optimization,
      title={Optimization algorithms on matrix manifolds},
      author={Absil, P-A and Mahony, Robert and Sepulchre, Rodolphe},
      year={2008},
      publisher={Princeton University Press}
}

@book{boumal2023introduction,
      title={An introduction to optimization on smooth manifolds},
      author={Boumal, Nicolas},
      year={2023},
      publisher={Cambridge University Press}
}

@article{edelman,
         author = {Edelman, Alan and Arias, Tom\'{a}s A. and Smith, Steven T.},
         title = {The Geometry of Algorithms with Orthogonality Constraints},
         journal = {SIAM Journal on Matrix Analysis and Applications},
         volume = {20},
         number = {2},
         pages = {303-353},
         year = {1998},
         doi = {10.1137/S0895479895290954},
         URL = {https://doi.org/10.1137/S0895479895290954},
         eprint = {https://doi.org/10.1137/S0895479895290954}
}

@article{CHAN198767,
         title = {Rank revealing QR factorizations},
         journal = {Linear Algebra and its Applications},
         volume = {88-89},
         pages = {67-82},
         year = {1987},
         issn = {0024-3795},
         doi = {https://doi.org/10.1016/0024-3795(87)90103-0},
         url = {https://www.sciencedirect.com/science/article/pii/0024379587901030},
         author = {Tony F. Chan}
}

@article{c4,
         author = {Colin Raffel and Noam Shazeer and Adam Roberts and Katherine Lee and Sharan Narang and Michael Matena and Yanqi Zhou and Wei Li and Peter J. Liu},
         title  = {Exploring the Limits of Transfer Learning with a Unified Text-to-Text Transformer},
         journal = {CoRR},
         volume = {abs/1910.10683},
         year = {2019},
         url = {http://arxiv.org/abs/1910.10683},
         eprinttype = {arXiv},
         eprint = {1910.10683},
         bibsource = {dblp computer science bibliography, https://dblp.org}
}

@misc{dodge2021documentinglargewebtextcorpora,
      title={Documenting Large Webtext Corpora: A Case Study on the Colossal Clean Crawled Corpus}, 
      author={Jesse Dodge and Maarten Sap and Ana Marasović and William Agnew and Gabriel Ilharco and Dirk Groeneveld and Margaret Mitchell and Matt Gardner},
      year={2021},
      eprint={2104.08758},
      archivePrefix={arXiv},
      primaryClass={cs.CL},
      url={https://arxiv.org/abs/2104.08758}, 
}

@article{goodfellow2016deep,
  title={Deep feedforward networks},
  author={Goodfellow, Ian and Bengio, Yoshua and Courville, Aaron},
  journal={Deep learning},
  volume={1},
  pages={161--217},
  year={2016},
  publisher={MIT press Cambridge, MA, USA}
}

@article{touvron2023llama,
      title={Llama 2: Open Foundation and Fine-Tuned Chat Models}, 
      author={Hugo Touvron and Louis Martin and Kevin Stone and Peter Albert and Amjad Almahairi and Yasmine Babaei and Nikolay Bashlykov and Soumya Batra and Prajjwal Bhargava and Shruti Bhosale and Dan Bikel and Lukas Blecher and Cristian Canton Ferrer and Moya Chen and Guillem Cucurull and David Esiobu and Jude Fernandes and Jeremy Fu and Wenyin Fu and Brian Fuller and Cynthia Gao and Vedanuj Goswami and Naman Goyal and Anthony Hartshorn and Saghar Hosseini and Rui Hou and Hakan Inan and Marcin Kardas and Viktor Kerkez and Madian Khabsa and Isabel Kloumann and Artem Korenev and Punit Singh Koura and Marie-Anne Lachaux and Thibaut Lavril and Jenya Lee and Diana Liskovich and Yinghai Lu and Yuning Mao and Xavier Martinet and Todor Mihaylov and Pushkar Mishra and Igor Molybog and Yixin Nie and Andrew Poulton and Jeremy Reizenstein and Rashi Rungta and Kalyan Saladi and Alan Schelten and Ruan Silva and Eric Michael Smith and Ranjan Subramanian and Xiaoqing Ellen Tan and Binh Tang and Ross Taylor and Adina Williams and Jian Xiang Kuan and Puxin Xu and Zheng Yan and Iliyan Zarov and Yuchen Zhang and Angela Fan and Melanie Kambadur and Sharan Narang and Aurelien Rodriguez and Robert Stojnic and Sergey Edunov and Thomas Scialom},
      year={2023},
      volume={2307.09288},
      journal={arXiv},
}

@article{dubey2024llama,
      title={The Llama 3 Herd of Models}, 
      author={Abhimanyu Dubey and Abhinav Jauhri and Abhinav Pandey and Abhishek Kadian and Ahmad Al-Dahle and Aiesha Letman and Akhil Mathur and Alan Schelten and Amy Yang and Angela Fan and Anirudh Goyal and Anthony Hartshorn and Aobo Yang and Archi Mitra and Archie Sravankumar and Artem Korenev and Arthur Hinsvark and Arun Rao and Aston Zhang and Aurelien Rodriguez et al.},
      year={2024},
      volume={2407.21783},
      journal={arXiv},
      primaryClass={cs.AI},
}

@article{olmo2026olmo3,
      title={Olmo 3}, 
      author={Team Olmo and : and Allyson Ettinger and Amanda Bertsch and Bailey Kuehl and David Graham and David Heineman and Dirk Groeneveld and Faeze Brahman and Finbarr Timbers and Hamish Ivison and Jacob Morrison and Jake Poznanski and Kyle Lo and Luca Soldaini and Matt Jordan and Mayee Chen and Michael Noukhovitch and Nathan Lambert and Pete Walsh and Pradeep Dasigi and Robert Berry and Saumya Malik and Saurabh Shah and Scott Geng and Shane Arora and Shashank Gupta and Taira Anderson and Teng Xiao and Tyler Murray and Tyler Romero and Victoria Graf and Akari Asai and Akshita Bhagia and Alexander Wettig and Alisa Liu and Aman Rangapur and Chloe Anastasiades and Costa Huang and Dustin Schwenk and Harsh Trivedi and Ian Magnusson and Jaron Lochner and Jiacheng Liu and Lester James V. Miranda and Maarten Sap and Malia Morgan and Michael Schmitz and Michal Guerquin and Michael Wilson and Regan Huff and Ronan Le Bras and Rui Xin and Rulin Shao and Sam Skjonsberg and Shannon Zejiang Shen and Shuyue Stella Li and Tucker Wilde and Valentina Pyatkin and Will Merrill and Yapei Chang and Yuling Gu and Zhiyuan Zeng and Ashish Sabharwal and Luke Zettlemoyer and Pang Wei Koh and Ali Farhadi and Noah A. Smith and Hannaneh Hajishirzi},
      year={2026},
      volume={2512.13961},
      journal={arXiv}
}

@misc{touvron2023llamaopenefficientfoundation,
      title={LLaMA: Open and Efficient Foundation Language Models}, 
      author={Hugo Touvron and Thibaut Lavril and Gautier Izacard and Xavier Martinet and Marie-Anne Lachaux and Timothée Lacroix and Baptiste Rozière and Naman Goyal and Eric Hambro and Faisal Azhar and Aurelien Rodriguez and Armand Joulin and Edouard Grave and Guillaume Lample},
      year={2023},
      eprint={2302.13971},
      archivePrefix={arXiv},
      primaryClass={cs.CL},
      url={https://arxiv.org/abs/2302.13971}, 
}

@inproceedings{wolf-etal-2020-transformers,
    title = "Transformers: State-of-the-Art Natural Language Processing",
    author = "Wolf, Thomas and Debut, Lysandre and Sanh, Victor and Chaumond, Julien and Delangue, Clement and Moi, Anthony and Cistac, Pierric and Rault, Tim and Louf, Remi and Funtowicz, Morgan and Davison, Joe and Shleifer, Sam and von Platen, Patrick and Ma, Clara and Jernite, Yacine and Plu, Julien and Xu, Canwen and Le Scao, Teven and Gugger, Sylvain and Drame, Mariama and Lhoest, Quentin and Rush, Alexander",
    booktitle = "Proceedings of the 2020 Conference on Empirical Methods in Natural Language Processing: System Demonstrations",
    month = oct,
    year = "2020",
    address = "Online",
    publisher = "Association for Computational Linguistics",
    url = "https://www.aclweb.org/anthology/2020.emnlp-demos.6",
    pages = "38--45"
}

@inproceedings{ainslie-etal-2023-gqa,
    title = "{GQA}: Training Generalized Multi-Query Transformer Models from Multi-Head Checkpoints",
    author = "Ainslie, Joshua  and
      Lee-Thorp, James  and
      de Jong, Michiel  and
      Zemlyanskiy, Yury  and
      Lebron, Federico  and
      Sanghai, Sumit",
    booktitle = "Proceedings of the 2023 Conference on Empirical Methods in Natural Language Processing",
    year = "2023",
    publisher = "Association for Computational Linguistics",
    pages = "4895--4901"
}

@misc{arjovsky2016unitaryevolutionrecurrentneural,
      title={Unitary Evolution Recurrent Neural Networks}, 
      author={Martin Arjovsky and Amar Shah and Yoshua Bengio},
      year={2016},
      eprint={1511.06464},
      archivePrefix={arXiv},
      primaryClass={cs.LG},
      url={https://arxiv.org/abs/1511.06464}, 
}

@misc{cogswell2016reducingoverfittingdeepnetworks,
      title={Reducing Overfitting in Deep Networks by Decorrelating Representations}, 
      author={Michael Cogswell and Faruk Ahmed and Ross Girshick and Larry Zitnick and Dhruv Batra},
      year={2016},
      eprint={1511.06068},
      archivePrefix={arXiv},
      primaryClass={cs.LG},
      url={https://arxiv.org/abs/1511.06068}, 
}

@misc{bansal2018gainorthogonalityregularizationstraining,
      title={Can We Gain More from Orthogonality Regularizations in Training Deep CNNs?}, 
      author={Nitin Bansal and Xiaohan Chen and Zhangyang Wang},
      year={2018},
      eprint={1810.09102},
      archivePrefix={arXiv},
      primaryClass={cs.LG},
      url={https://arxiv.org/abs/1810.09102}, 
}

\appendix

\section{Broader Impact} \label{apx:impact}

This work aims to lower the parameter count and the compute resources needed to train large language models of high performance.
Furthermore, it also saves costs during inference.
However, this may not directly translate in an overall reduction of resource usage, as cheaper access may further drive demand.

The contribution of this work lies in the algorithmic foundations of the central building blocks of neural networks; consequently, no particular societal harm is seen as directly arising from it. However, it is acknowledged that any algorithmic advances applicable to generative AI may translate to increased quality of disinformation content or propaganda.

\section{Full vs Rank-\textit{k} Memory and Compute Comparison}
\label{app:full_vs_rank_k_flops}

For a weight matrix $W \in \mathbb{R}^{m \times n}$, we store $mn$ elements, where each element can take a few bytes depending on the storage (E.g., 4 bytes for FP32, or 2 bytes for FP16, etc.).

For $W \approx U_k \Sigma_k V_k^\top$ with $U_k \in \mathbb{R}^{m \times k}$, $\Sigma_k \in \mathbb{R}^{k \times k}$, and $V_k \in \mathbb{R}^{n \times k}$, we store $mk$, $k$, and $nk$ elements for unconstrained parameters, i.e., $k(m+n+1) \approx k(m+n)$, and the same amount for constrained parameters (i.e. QR decomposition on $U_k$ and $V_k$, and softplus on $\Sigma$). Hence, the total memory requirement is $2k(m+n)$.

Hence, rank-\textit{k} needs less memory than full rank when 
\[
2k(m+n) < mn
\]
That is 
\[
k < mn/2(m+n)
\]

For the input $x \in \mathbb{R}^n$ and $W \in \mathbb{R}^{m \times n}$, computing $y = Wx$ costs about $mn$ multiply-adds (i.e., $\approx 2mn$ FLOPs if counting one multiply and one add separately).\\

For $W \approx U_k \Sigma_k V_k^\top$ with $U_k \in \mathbb{R}^{m \times k}$, $\Sigma_k \in \mathbb{R}^{k \times k}$, and $V_k \in \mathbb{R}^{n \times k}$, computing $y = Wx$ costs
\[
z = V_k^\top x \; (\text{cost } \approx 2kn), \quad
z' = \Sigma_k z \; (\text{cost } \approx k), \quad
y = U_k z' \; (\text{cost } \approx 2mk).
\]
The total cost is therefore $\approx 2k(n + m) + k \approx 2k(n + m)$ FLOPs. \\

Given a batch size of $B$, sequence length of $S$, and number of mini-steps of $t$, the number of FLOPs for a weight matrix $W \in \mathbb{R}^{m \times n}$ is
\[
2tBSmn
\]

And, the number of FLOPs for For $W \approx U_k \Sigma_k V_k^\top$ with $U_k \in \mathbb{R}^{m \times k}$, $\Sigma_k \in \mathbb{R}^{k \times k}$, and $V_k \in \mathbb{R}^{n \times k}$ is 
\[
2tBSk(m+n) + (2nk^2 -2/3k^3)
\]

Where the second component is the QR decomposition cost incurred once per training step (we drop the sofplus cost as it is negligible).

To compare the number of FLOPs for of full rank and rank-\textit{k}, suppose $B=256$, $S=512$, $k=256$, and $n=m=1024$. We calculate the number of FLOPs for 3 number of mini-steps: $t=1, 15, 32$. 

For mini-step $t=1$, we get the following for the matrix multiplication and QR decomposition FLOPs:
\[
137.43E^9 + 123.03E^6
\]

For mini-step $t=16$, we get
\[
2199.02E^9 + 123.03E^6
\]

And, for For mini-step $t=32$, we get
\[
4398.04E^9 + 123.03E^6
\]

A couple of observations: 1) Even for mini-step $t=1$, the QR decomposition FLOPs are much smaller than matrix multiplications FLOPS, and 2) As the number of mini-steps increases to 16 or 32, the QR decomposition flops become a smaller porportion of the overall FLOPs.

The formulas for memory and compute comparison of full rank vs rank-\textit{k} is given in Table~\ref{table:memory_compute_comparison}.

\begin{table}
\centering
\caption{Resource comparison for weight matrix $W \in \mathbb{R}^{m \times n}$ and $W \approx U_k \Sigma_k V_k^\top$ with $U_k \in \mathbb{R}^{m \times k}$, $\Sigma_k \in \mathbb{R}^{k \times k}$, and $V_k \in \mathbb{R}^{n \times k}$, where $B$ is the batch size, $S$ is the sequence length, and $t$ is the number of mini-steps in gradient accumulation.}
\begin{tabular}{ccc}
\toprule
\textbf{} & \textbf{Full Rank} & \textbf{Rank-\textit{k}} \\ \midrule
\textbf{Memory} & $mn$     & $2k(m+n)$ \\ 
\textbf{Flops}  & $2tBSmn$ & $2tBSk(m+n) + (2nk^2 -2/3k^3)$  \\ \bottomrule
\end{tabular}
\label{table:memory_compute_comparison}
\end{table}

\section{Benefits of Orthonormality in TSVD}
\label{app:tsvd_orthonormality}

Consider a single linear layer with weight matrix
\[
W \in \mathbb{R}^{m \times n},
\]
where \(m\) is the output dimension, \(n\) is the input dimension, and
\[
1 \le r \le \min\{m,n\}
\]
is the chosen rank. The notation \(I_d\) denotes the \(d\times d\) identity matrix. For a matrix \(M\), \(\|M\|_2\) denotes its spectral norm and \(\|M\|_F\) its Frobenius norm; for a vector \(x\), \(\|x\|_2\) denotes the Euclidean norm. For scalars \(a_1,\dots,a_r\), \(\operatorname{diag}(a_1,\dots,a_r)\) denotes the diagonal matrix with those entries on the diagonal.

In TSVD, the layer is parameterized as
\[
W = \frac{1}{\sqrt r}\,U \Sigma V^\top,
\]
where
\[
U \in \mathbb{R}^{m \times r}, \qquad V \in \mathbb{R}^{n \times r}, \qquad
\Sigma = \operatorname{diag}(\sigma_1,\dots,\sigma_r),
\]
and the columns of \(U\) and \(V\) are orthonormal:
\[
U^\top U = I_r,
\qquad
V^\top V = I_r.
\]
Thus \(U\) and \(V\) are semi-orthogonal matrices. The diagonal entries \(\sigma_1,\dots,\sigma_r\) are positive and carry all spectral scaling.

To reflect the implementation more explicitly, one may write
\[
U = \operatorname{orth}(A_U), \qquad
V = \operatorname{orth}(A_V), \qquad
\sigma_i = \operatorname{softplus}(\rho_i) + \varepsilon,
\]
where \(A_U \in \mathbb{R}^{m\times r}\), \(A_V \in \mathbb{R}^{n\times r}\), and \(\rho \in \mathbb{R}^r\) are unconstrained trainable parameters, \(\varepsilon > 0\) is a small constant, \(\operatorname{softplus}(t)=\log(1+e^t)\), and \(\operatorname{orth}(A)\) denotes the \(Q\) factor of a reduced QR decomposition of \(A\). Importantly, the orthonormal object in QR is the \(Q\) factor; the triangular factor \(R\) is not orthonormal in general. In TSVD, orthonormality is carried by \(U\) and \(V\), while the diagonal scaling is carried separately by \(\Sigma\).

\paragraph{Basic structural consequence.}
Because \(U^\top U=I_r\) and \(V^\top V=I_r\),
\[
W^\top W
=
\frac{1}{r}V\Sigma U^\top U \Sigma V^\top
=
\frac{1}{r}V\Sigma^2V^\top,
\]
and similarly
\[
WW^\top
=
\frac{1}{r}U\Sigma^2U^\top.
\]
Hence the nonzero eigenvalues of \(W^\top W\) are \(\sigma_i^2/r\), and therefore the nonzero singular values of \(W\) are exactly
\[
\frac{\sigma_1}{\sqrt r},\dots,\frac{\sigma_r}{\sqrt r}.
\]
Recall that \(\|x\|_2\) denotes the Euclidean norm for vectors, \(\|W\|_2\) denotes the spectral norm for matrices, and \(\|W\|_F\) denotes the Frobenius norm. Since the spectral norm equals the largest singular value and the squared Frobenius norm equals the sum of squared singular values, it follows that:
\[
\|W\|_2 = \frac{1}{\sqrt r}\max_{1\le i\le r}\sigma_i,
\qquad
\|W\|_F^2 = \frac{1}{r}\sum_{i=1}^r \sigma_i^2.
\]

This shows that the basis matrices \(U\) and \(V\) specify only the learned input and output subspaces, whereas the size of the layer is controlled entirely by \(\Sigma\).

\begin{proposition}[Signal and gradient stability in orthonormal TSVD]
\label{prop:tsvd_stability}
Let
\[
W=\frac{1}{\sqrt r}U\Sigma V^\top
\]
with \(U^\top U=I_r\), \(V^\top V=I_r\), and \(\Sigma=\operatorname{diag}(\sigma_1,\dots,\sigma_r)\) with \(\sigma_i>0\). Define
\[
\sigma_{\max}=\max_{1\le i\le r}\sigma_i,
\qquad
\sigma_{\min}=\min_{1\le i\le r}\sigma_i.
\]
Then the following statements hold.

\begin{enumerate}
\item For every input vector \(x\in\mathbb{R}^n\),
\[
\|Wx\|_2 \le \frac{\sigma_{\max}}{\sqrt r}\|x\|_2.
\]
Hence the forward operator norm is controlled exactly by \(\sigma_{\max}\).

\item For every backpropagated signal \(\delta\in\mathbb{R}^m\),
\[
\|W^\top \delta\|_2 \le \frac{\sigma_{\max}}{\sqrt r}\|\delta\|_2.
\]
Hence the backward operator norm is controlled by the same quantity.

\item If \(x\) lies in the represented input subspace \(\operatorname{span}(V)\), then
\[
\frac{\sigma_{\min}}{\sqrt r}\|x\|_2
\le
\|Wx\|_2
\le
\frac{\sigma_{\max}}{\sqrt r}\|x\|_2.
\]
Likewise, if \(\delta\) lies in the represented output subspace \(\operatorname{span}(U)\), then
\[
\frac{\sigma_{\min}}{\sqrt r}\|\delta\|_2
\le
\|W^\top \delta\|_2
\le
\frac{\sigma_{\max}}{\sqrt r}\|\delta\|_2.
\]
Thus, within the learned low-rank subspaces, neither forward signals nor backward signals can explode beyond \(\sigma_{\max}/\sqrt r\), and neither can they vanish beyond \(\sigma_{\min}/\sqrt r\).

\item Let \(\mathcal{L}(W)\) be the loss and let
\[
G = \nabla_W \mathcal{L}(W)
\]
be its gradient with respect to the full weight matrix. If \(u_i\) and \(v_i\) denote the \(i\)-th columns of \(U\) and \(V\), then
\[
W = \frac{1}{\sqrt r}\sum_{i=1}^r \sigma_i u_i v_i^\top,
\qquad
\frac{\partial \mathcal{L}}{\partial \sigma_i}
=
\frac{1}{\sqrt r}\,u_i^\top G v_i.
\]
Therefore each scalar \(\sigma_i\) controls one orthogonal rank-one mode \(u_i v_i^\top\), and the mode strengths are learned without ambiguity from the norms of the basis vectors.
\end{enumerate}
\end{proposition}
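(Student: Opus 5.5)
The plan is to reduce every part of Proposition~\ref{prop:tsvd_stability} to the single structural fact already established above: $U$ and $V$ are isometries on $\mathbb{R}^r$, i.e.\ $\|Uz\|_2=\|z\|_2$ and $\|Vz\|_2=\|z\|_2$ for every $z\in\mathbb{R}^r$. Together with $V^\top V=I_r$, this also gives $\|V^\top x\|_2\le\|x\|_2$, with equality when $x\in\operatorname{span}(V)$. Here $V^\top$ acts as the orthogonal projection onto $\operatorname{span}(V)$, written in $V$-coordinates: $\|V^\top x\|_2^2=x^\top VV^\top x$, and $VV^\top$ is a projector.

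For items 1 and 2, I write $Wx=\frac{1}{\sqrt r}U(\Sigma V^\top x)$. Then
\[
\|Wx\|_2=\tfrac{1}{\sqrt r}\|\Sigma V^\top x\|_2\le\tfrac{\sigma_{\max}}{\sqrt r}\|V^\top x\|_2\le\tfrac{\sigma_{\max}}{\sqrt r}\|x\|_2,
\]
using that a diagonal matrix scales each coordinate by at most $\sigma_{\max}$. Alternatively, one can cite $\|W\|_2=\sigma_{\max}/\sqrt r$ from the structural computation above. Item 2 is identical, since $W^\top=\frac1{\sqrt r}V\Sigma U^\top$ and $\|W^\top\|_2=\|W\|_2$.

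For item 3, I take $x=Vz$ with $z\in\mathbb{R}^r$, so that $\|x\|_2=\|z\|_2$ and $V^\top x=z$. Then $\|Wx\|_2=\frac1{\sqrt r}\|\Sigma z\|_2$. Since $\sigma_{\min}^2\|z\|^2\le\sum_i\sigma_i^2z_i^2\le\sigma_{\max}^2\|z\|^2$, both bounds follow at once. The statement for $\delta=Uz\in\operatorname{span}(U)$ is obtained by swapping the roles of $U$ and $V$. This positivity of the $\sigma_i$ is the only place where $\sigma_{\min}>0$ is used.

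For item 4, the expansion $W=\frac1{\sqrt r}\sum_i\sigma_iu_iv_i^\top$ is just block multiplication of $U\Sigma V^\top$ by columns. For the derivative, I treat $U,V$ as fixed and apply the chain rule through the linear map $\sigma_i\mapsto W$. Since $\partial W/\partial\sigma_i=\frac1{\sqrt r}u_iv_i^\top$, we get
\[
\frac{\partial\mathcal L}{\partial\sigma_i}=\Big\langle G,\tfrac1{\sqrt r}u_iv_i^\top\Big\rangle_F=\tfrac1{\sqrt r}\operatorname{tr}(G^\top u_iv_i^\top)=\tfrac1{\sqrt r}u_i^\top Gv_i.
\]
Orthogonality of the modes follows from $\langle u_iv_i^\top,u_jv_j^\top\rangle_F=(u_i^\top u_j)(v_i^\top v_j)=\delta_{ij}$. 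In addition, $\|u_iv_i^\top\|_F=1$, so no norm of a basis vector can be traded against $\sigma_i$; this justifies the "without ambiguity" remark.

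The only step needing care is item 4's interpretation. The derivative is a partial derivative with respect to $\sigma_i$ holding the orthonormal factors fixed. It is not a derivative with respect to $\rho_i$, which would carry an extra factor of $\operatorname{softplus}'(\rho_i)$, nor with respect to $A_U$ or $A_V$. I will state this explicitly. The remaining steps are routine.
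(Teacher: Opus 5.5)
Your proof is correct and follows the paper's argument essentially step for step: the isometry of $U$ plus the contraction of $V^\top$ for the forward bound, transposition for the backward bound, the substitution $x=Vz$ for the two-sided bound, and the Frobenius pairing $\langle G,\partial W/\partial\sigma_i\rangle_F$ for item 4. Your check that the modes $u_iv_i^\top$ are Frobenius-orthonormal, and your clarification that the derivative is taken in $\sigma_i$ rather than $\rho_i$, go slightly beyond what the paper writes. One small slip in your commentary: nonnegativity of the $\sigma_i$ is also what gives $\|\Sigma z\|_2\le\sigma_{\max}\|z\|_2$ in items 1--2, so item 3 is not the only place it is used, and strict positivity matters only in that it makes the lower bound nontrivial.
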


\begin{proof}
For the forward bound,
\[
Wx = \frac{1}{\sqrt r}U\Sigma V^\top x.
\]
Since \(U^\top U=I_r\), the map \(z\mapsto Uz\) preserves Euclidean norm on \(\mathbb{R}^r\), i.e.,
\[
\|Uz\|_2=\|z\|_2 \qquad \text{for all } z\in\mathbb{R}^r.
\]
Since \(V^\top V=I_r\), the map \(x\mapsto V^\top x\) is a contraction:
\[
\|V^\top x\|_2 \le \|x\|_2 \qquad \text{for all } x\in\mathbb{R}^n.
\]
Therefore,
\[
\|Wx\|_2
=
\frac{1}{\sqrt r}\|U\Sigma V^\top x\|_2
=
\frac{1}{\sqrt r}\|\Sigma V^\top x\|_2
\le
\frac{\|\Sigma\|_2}{\sqrt r}\|x\|_2
=
\frac{\sigma_{\max}}{\sqrt r}\|x\|_2.
\]
The backward bound is identical after replacing \(W\) by
\[
W^\top = \frac{1}{\sqrt r}V\Sigma U^\top.
\]

Now suppose \(x\in\operatorname{span}(V)\). Then \(x=Vz\) for some \(z\in\mathbb{R}^r\), and since \(V\) has orthonormal columns,
\[
\|x\|_2=\|Vz\|_2=\|z\|_2.
\]
Hence
\[
\|Wx\|_2
=
\frac{1}{\sqrt r}\|U\Sigma z\|_2
=
\frac{1}{\sqrt r}\|\Sigma z\|_2.
\]
Since \(\Sigma\) is diagonal with positive diagonal entries,
\[
\sigma_{\min}\|z\|_2 \le \|\Sigma z\|_2 \le \sigma_{\max}\|z\|_2,
\]
which yields
\[
\frac{\sigma_{\min}}{\sqrt r}\|x\|_2
\le
\|Wx\|_2
\le
\frac{\sigma_{\max}}{\sqrt r}\|x\|_2.
\]
The same argument applies to \(W^\top \delta\) when \(\delta\in\operatorname{span}(U)\).

Finally, the expansion
\[
W = \frac{1}{\sqrt r}\sum_{i=1}^r \sigma_i u_i v_i^\top
\]
follows by columnwise multiplication of \(U\Sigma V^\top\). Differentiating with respect to \(\sigma_i\) gives
\[
\frac{\partial W}{\partial \sigma_i}
=
\frac{1}{\sqrt r}u_i v_i^\top.
\]
Therefore, by the Frobenius inner-product identity,
\[
\frac{\partial \mathcal{L}}{\partial \sigma_i}
=
\left\langle G,\frac{\partial W}{\partial \sigma_i}\right\rangle_F
=
\frac{1}{\sqrt r}\langle G,u_i v_i^\top\rangle_F
=
\frac{1}{\sqrt r}u_i^\top G v_i.
\]
\end{proof}

\paragraph{Benefits in Preventing Gradient Explosion and Vanishing.}
Proposition~\ref{prop:tsvd_stability} makes the role of orthonormality explicit. The basis factors \(U\) and \(V\) are unit-gain maps on the represented rank-\(r\) subspaces, so they do not create extra amplification or attenuation. All gain is concentrated in the diagonal matrix \(\Sigma\). Consequently:

\begin{enumerate}
\item \emph{No hidden explosion in signal propagation.} Forward activations and backward signals can only grow in proportion to \(\sigma_{\max}/\sqrt r\).

\item \emph{No hidden vanishing inside the represented subspace.} As long as \(\sigma_{\min}\) is not too small, the layer cannot collapse the represented input and output subspaces by more than the factor \(\sigma_{\min}/\sqrt r\).

\item \emph{The only unavoidable information loss is the intended rank constraint.} Components orthogonal to \(\operatorname{span}(V)\) are discarded by any rank-\(r\) model. Orthonormality does not remove this fundamental low-rank bottleneck, but it does prevent additional instability caused by badly scaled basis factors.
\end{enumerate}

Consider an unconstrained low-rank parameterization
\[
W = AB^\top,
\qquad
A\in\mathbb{R}^{m\times r},
\quad
B\in\mathbb{R}^{n\times r}.
\]
This representation is not identifiable: for every invertible matrix \(M\in\mathbb{R}^{r\times r}\),
\[
AB^\top = (AM)(BM^{-T})^\top.
\]
In particular, for every scalar \(c>0\),
\[
AB^\top = (cA)(c^{-1}B)^\top.
\]
Thus the same weight matrix can be represented by arbitrarily imbalanced factors.

If \(\mathcal{L}(W)\) is the loss and \(G=\nabla_W \mathcal{L}(W)\), then by the chain rule,
\[
\nabla_A \mathcal{L} = GB,
\qquad
\nabla_B \mathcal{L} = G^\top A.
\]
After the rescaling \(A'=cA\), \(B'=c^{-1}B\), one obtains
\[
\nabla_{A'}\mathcal{L} = c^{-1}\nabla_A\mathcal{L},
\qquad
\nabla_{B'}\mathcal{L} = c\,\nabla_B\mathcal{L}.
\]
Hence one factor gradient can become arbitrarily small while the other becomes arbitrarily large, even though the represented matrix \(W\) is exactly unchanged. This is a genuine source of vanishing and exploding parameter gradients in unconstrained low-rank training.

TSVD removes this hidden scale ambiguity because the basis factors satisfy
\[
U^\top U = I_r,
\qquad
V^\top V = I_r,
\]
so their norms are fixed by construction and all scaling is isolated in \(\Sigma\). As a result, the optimizer cannot create pathological cancellations by blowing up one factor and shrinking the other.

\paragraph{The advantage of exact orthonormality to enforcing orthogonality using regularization.}
A common alternative is to optimize
\[
\mathcal{L}(W)
+
\lambda_U\|U^\top U-I_r\|_F^2
+
\lambda_V\|V^\top V-I_r\|_F^2,
\]
where \(\lambda_U,\lambda_V>0\) are regularization coefficients. This approach only \emph{encourages} orthonormality, and its success depends on a trade-off between task fitting and constraint enforcement. If \(\lambda_U\) and \(\lambda_V\) are too small, the factors drift away from orthonormality; if they are too large, optimization can be dominated by the regularization terms.

By contrast, TSVD enforces orthonormality by construction:
\[
U = \operatorname{orth}(A_U),
\qquad
V = \operatorname{orth}(A_V).
\]
Therefore every forward pass uses basis matrices that already satisfy the intended constraints. This removes the need to tune separate orthogonality penalties and yields a cleaner optimization problem in which the model is always evaluated on the desired constrained parameterization.

\paragraph{Interpretation.}
The main advantage of orthonormal TSVD is therefore not merely that it produces a valid low-rank factorization. Rather, it yields a \emph{well-conditioned} low-rank parameterization in which

\begin{enumerate}
\item the represented subspaces and spectral magnitudes are cleanly separated;
\item signal amplification and attenuation are explicit and easy to control;
\item parameter gradients are not corrupted by arbitrary factor rescalings; and
\item orthonormality is maintained throughout training without introducing additional loss terms.
\end{enumerate}

These properties are especially appealing in low-rank pretraining, where optimization is already harder than full-rank training and unnecessary conditioning problems can have a disproportionate effect on final performance.

In summary, TSVD is best viewed as a by-construction orthonormal low-rank parameterization whose main theoretical advantage is \emph{conditioning}: the learned subspaces are orthonormal, the spectral scaling is isolated in \(\Sigma\), signal growth is explicit, and hidden factor-rescaling pathologies are removed. This makes TSVD conceptually distinct from optimizer-state compression, adapter-style low-rank updates, regularization-based orthogonality, sparse-plus-low-rank hybrids, and more general manifold-based schemes.

\section{Effective Rank Analysis}

Effective ranks are provided for various projections—including query, key, value, and out within the attention mechanism, as well as gate, up, and down within the subsequent gated MLP—across the layers of different models.
Fig.~\ref{fig:effective_rank_main} shows effective ranks across the Llama-style models trained in the main benchmark in Sec.~\ref{sec:benchmark}.
Fig.~\ref{fig:effective_rank_llama2} shows results for Llama-2 7B, 13B and 70B.
Fig.~\ref{fig:effective_rank_llama3} shows results for Llama-3.1 8B and 70B.
Fig.~\ref{fig:effective_rank_olmo3} shows results for OLMo-3 7B and 32B.

\begin{figure}
    \centering
    \includegraphics[width=\linewidth]{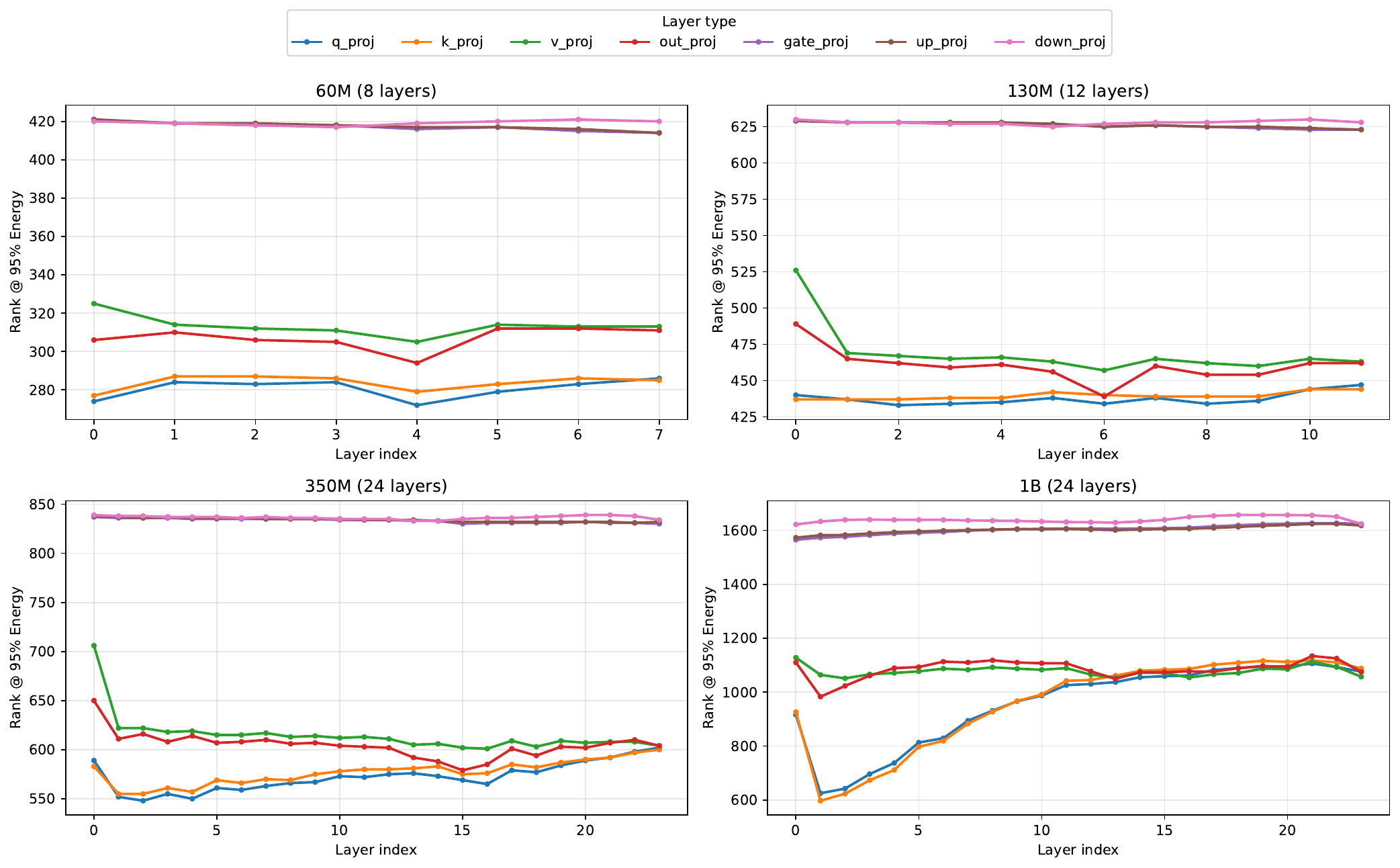}
    \caption{Effective Ranks for Different Weight Types Across Transformer Layers for Models of the Benchmark in Sec.~\ref{sec:benchmark}.}
    \label{fig:effective_rank_main}
\end{figure}

\begin{figure}
    \centering
    \includegraphics[width=\linewidth]{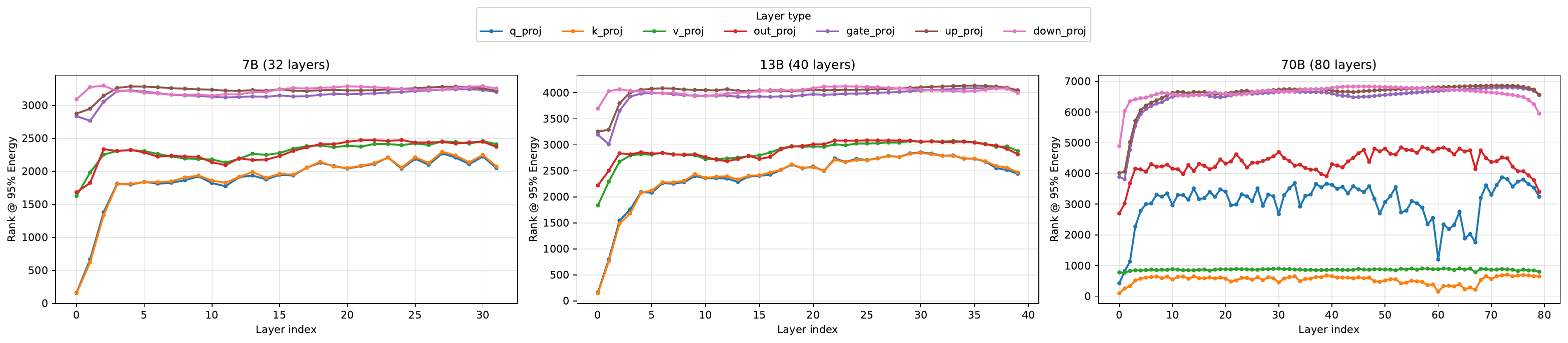}
    \caption{Effective Ranks for Different Weight Types Across Transformer Layers for Models of the Llama-2 Model Family.}
    \label{fig:effective_rank_llama2}
\end{figure}

\begin{figure}
    \centering
    \includegraphics[width=\linewidth]{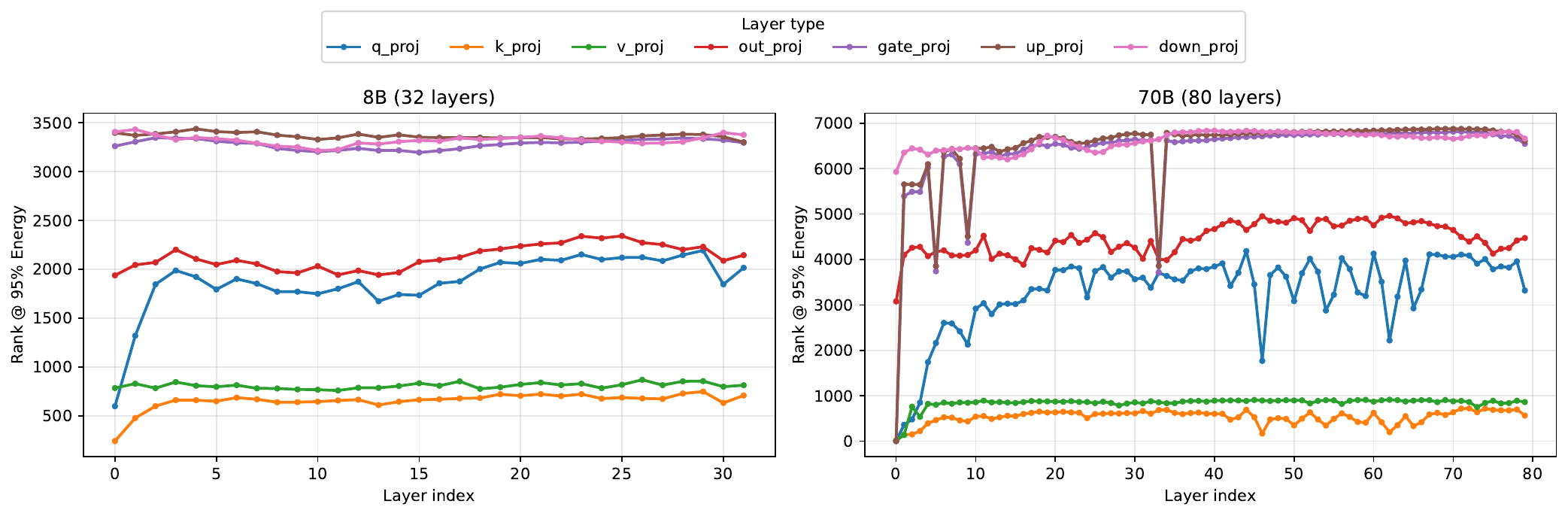}
    \caption{Effective Ranks for Different Weight Types Across Transformer Layers for Models of the Llama-3.1 Model Family.}
    \label{fig:effective_rank_llama3}
\end{figure}

\begin{figure}
    \centering
    \includegraphics[width=\linewidth]{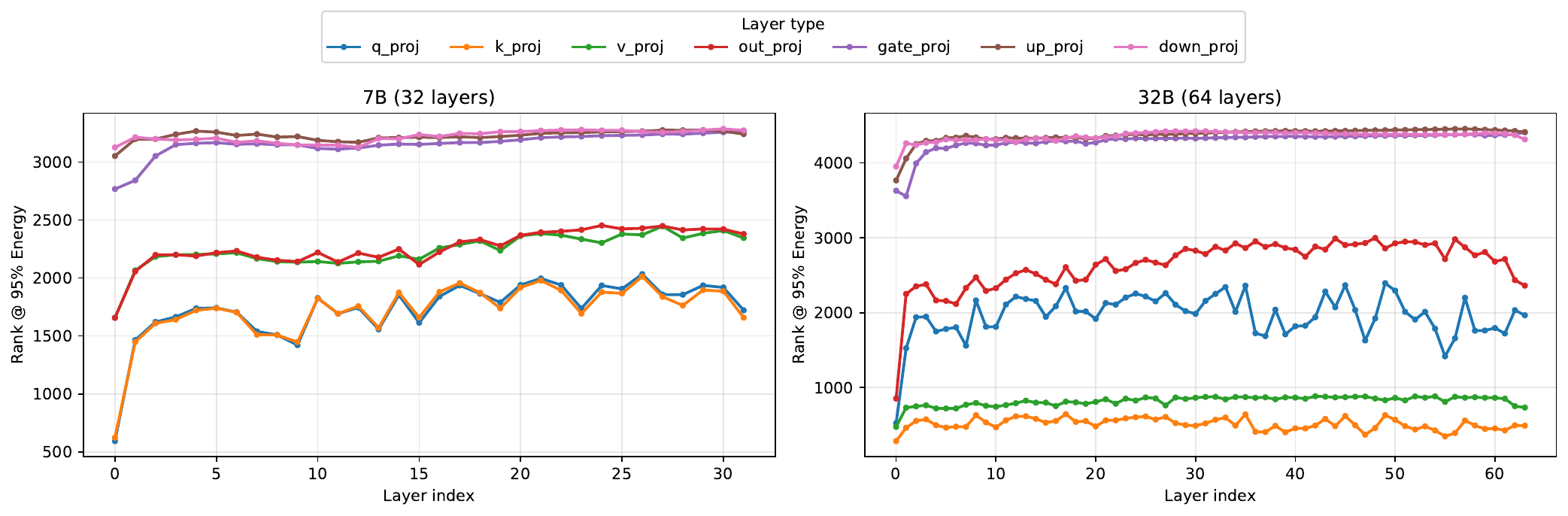}
    \caption{Effective Ranks for Different Weight Types Across Transformer Layers for Models of the OLMo-3 Model Family.}
    \label{fig:effective_rank_olmo3}
\end{figure}

\section{Licences} \label{apx:licences}

The primary assets used within this work consist of the C4 dataset. \citep{c4}, which is available under ODC-BY. \footnote{\url{https://huggingface.co/datasets/allenai/c4}}
The Llama-style models used within the benchmark are not reusing any pretrained weights, and are provided through \cite{zhao2024galorememoryefficientllmtraining}.

Furthermore, the energy analysis was conducted on the Llama-2 models, adhering to the Llama 2 Community License Agreement. \footnote{\url{https://huggingface.co/meta-llama/Llama-2-7b-chat-hf/blob/main/LICENSE.txt}}
Similarly Llama-3.1 was used adhering to the Llama 3.1 Community License Agreement. \footnote{\url{https://huggingface.co/meta-llama/Llama-3.1-70B-Instruct/blob/main/LICENSE}}
OLMo 3 is released under Apache License v.2.0.

\section{Compute Resources} \label{apx:compute}

All the experiments were conducted on a node with 8 H200 GPUs, with the exception of the gradient accumulation experiment (Figure \ref{fig:grad_accu}), which was conducted on a single NVIDIA A100 GPU.

Training a Llama-60m model takes about 30 minutes in this configuration, the Llama-130m model takes 2 hours, the Llama-350m model 12 hours and the Llama-1.3B model about 48 hours.
Runs for the 7B model take about 150 hours each.


\newpage

\end{document}